\date{}
\title{\bfseries\papertitle}
\author{
Aadirupa Saha%
\thanks{Microsoft Research, New York City, USA; {\tt aadirupa.saha@microsoft.com}.}
\and 
Pierre Gaillard \thanks{Inria Greenoble - Rhône-Alpes and the Laboratoire Jean Kuntzmann, France. {\tt pierre.gaillard@inria.fr}}
}
\newtheorem{thm}{Theorem}
\newtheorem*{thm*}{Theorem}
\newtheorem*{lem*}{Lemma}
\newtheorem{defn}[thm]{Definition}
\newtheorem{rem}{Remark}
\newenvironment{customthm}[1]
  {\innercustomthm}
  {\endinnercustomthm}
\newenvironment{customlemma}[1]
  {\innercustomlemma}
  {\endinnercustomlemma}
\newcommand{\R}{{\mathbb R}}
\renewcommand{\P}{{\mathbf P}}
\newcommand{\W}{{\mathbf W}}
\newcommand{\E}{{\mathbf E}}
\newcommand{\1}{{\mathbf 1}}
\newcommand{\cX}{{\mathcal X}}
\newcommand{\cA}{{\mathcal A}}
\newcommand{\cB}{{\mathcal B}}
\newcommand{\cC}{{\mathcal C}}
\newcommand{\hcB}{{\hat \cB}}
\newcommand{\cI}{{\mathcal I}}
\newcommand{\cJ}{{\mathcal J}}
\newcommand{\cS}{{\mathcal S}}
\newcommand{\cE}{{\mathcal E}}
\newcommand{\cG}{{\mathcal G}}
\newcommand{\M}{{\mathbf M}}
\newcommand{\bN}{{\mathbf N}}
\newcommand{\cD}{{\mathcal D}}
\DeclareMathOperator{\kl}{kl}
\newcommand{\hp}{{\hat p}}
\newcommand{\hi}{{\hat i}}
\newcommand{\p}{{\mathbf p}}
\newcommand{\y}{{\mathbf y}}
\newcommand{\sm}{\setminus}
\renewcommand{\epsilon}{\varepsilon}
\renewcommand{\hat}{\widehat}
\newcommand{\btheta}{\boldsymbol \theta}
\newcommand{\bSigma}{\boldsymbol \Sigma}
\newcommand{\bmu}{{\boldsymbol \mu}}
\newcommand{\bsigma}{\boldsymbol \sigma}
\DeclareMathOperator*{\argmax}{argmax}
\def \algrucbp{\texttt{SlDB-UCB}}
\def \algrmed{\texttt{SlDB-ED}}
\def \dspaa{{\it Sleeping-Dueling Bandit with Stochastic Preferences and Adversarial Availabilities}}
\def \SPAA{\texttt {DB-SPAA}}
\def \papertitle{Dueling Bandits with Adversarial Sleeping} 
\begin{document}

\maketitle

\begin{abstract}
We introduce the problem of sleeping dueling bandits with stochastic preferences and adversarial availabilities (DB-SPAA). In almost all dueling bandit applications, the decision space often changes over time; eg, retail store management, online shopping, restaurant recommendation, search engine optimization, etc. Surprisingly, this `sleeping aspect' of dueling bandits has never been studied in the literature. Like dueling bandits, the goal is to compete with the best arm by sequentially querying the preference feedback of item pairs. The non-triviality however results due to the non-stationary item spaces that allow any arbitrary subsets items to go unavailable every round. The goal is to find an optimal `no-regret' policy that can identify the best available item at each round, as opposed to the standard `fixed best-arm regret objective' of dueling bandits. We first derive an instance-specific lower bound for DB-SPAA $\Omega( \sum_{i =1}^{K-1}\sum_{j=i+1}^K \frac{\log T}{\Delta(i,j)})$, where $K$ is the number of items and $\Delta(i,j)$ is the gap between items $i$ and $j$. This indicates that the sleeping problem with preference feedback is inherently more difficult than that for classical multi-armed bandits (MAB).  We then propose two algorithms, with near optimal regret guarantees.
Our results are corroborated empirically.
\end{abstract}
\vspace{-10pt}

\section{Introduction}
\label{sec:intro}
The problem of \emph{Dueling-Bandits} has gained much attention in the machine learning community \cite{Yue+12,Zoghi+14RCS,Zoghi+15}, which is an online learning framework that generalizes the standard multiarmed bandit (MAB) \cite{Auer+02} setting for identifying a set of `good' arms from a fixed decision-space (set of arms/items) by querying preference feedback of actively chosen item-pairs. More formally, in dueling bandits, the learning proceeds in rounds: At each round, the learner selects a pair of arms and observes stochastic preference feedback of the winner of the comparison (duel) between the selected arms; the objective of the learner is to minimize the regret with respect to a (or set of) `best' arm(s) in hindsight.  
Towards this several algorithms have been proposed \cite{Ailon+14,Zoghi+14RUCB,Komiyama+15,Adv_DB}.
Due to the inherent exploration-vs-exploitation tradeoff of the learning framework and several 
advantages of preference feedback \cite{Busa14survey,Yue+09}, many real-world applications can be modeled as 
dueling bandits, including movie recommendations, retail management, search engine optimization, job scheduling, etc.

\begin{figure}
	\begin{center}
		\begin{tabular}{l}
			\hline \\[-10pt]
			\textbf{Parameters. }Item set: $[K]$ (known), Preference: $\P$ (un-\\known), Available item sets: $\cS_T$ (observed sequentially)\\
			\textbf{For} $t = 1, 2, \ldots, T$, the learner: \\
		\quad \textbullet~ Observes $S_t \subseteq [K]$ the set of available items \\
		\quad \textbullet~ Chooses $(x_t, y_t) \in S_t^2$  \\
		\quad \textbullet~ Observes $o_t := \1(x_t \succ y_t) \sim \text{Ber}(\P(x_t,y_t))$  \\
		\quad \textbullet~ Incurs $r_t := \nicefrac{1}{2} \big(\P(i_t^*,x_t)+\P(i_t^*,y_t)-1\big)$; \\
		\quad\quad\quad where $i_t^*$ is such that $\min_{j \in S_t} \P(i_t^*, j) \geq \nicefrac{1}{2}$ \\
			\hline
		\end{tabular}  \vspace*{-10pt}
	\end{center}
  \caption{Setting of \SPAA$(\P,\cS_T)$} 
  \label{fig:setting}
\end{figure}



However, in reality, the decision spaces might often change over time due to the non-availability of some items, which are considered to be `sleeping'. This `sleeping-aspect' of online decision making problems has been widely studied in the standard multiarmed bandit (MAB) literature \cite{kanade09,neu14,kale16,kleinberg+10,kanade14,
cortes+19}. There the goal is to learn a `no-regret' policy that maps to the `best awake item' of any available (non-sleeping) subset of items, and the learner's performance is measured with respect to the optimal policy in hindsight. This setting is famously known as \emph{Sleeping Bandits} in MAB \cite{kanade09,neu14,kale16,cortes+19}. More discussions are given in Related Works.

Surprisingly, however, the \emph{`sleeping problem'} is completely unaddressed in the preference bandits literature, even for the special case of pairwise preference feedback, which is famously studied as \emph{Dueling Bandits} \cite{Zoghi+14RUCB,Yue+12}, even though the setup of changing decision spaces are quite relevant in almost every practical applications: Be that in retail stores where some items might go out of production over time, for search engine optimization some websites could be down on certain days, in recommender systems some restaurants might be closed or movies could be outdated, in clinical trials certain drugs could be out of stock, and many more. 
This work is the first to consider the problem of \emph{Sleeping Dueling Bandits}, where we formulated the stochastic $K$-armed dueling bandit problem with adversarial item availabilities. Here at each round $t \in \{1,2,\ldots, T\}$ the item preferences are considered to be generated from a fixed underlying (and of course unknown) preference matrix $\P \in [0,1]^{K \times K}$, however, the set of available actions $S_t \subseteq \{1,2,\ldots, K\}$ is assumed to be adversarially chosen by the environment. We call the problem as \dspaa\, or in brief \SPAA$(\P,\cS_T)$, where $\cS_T = \{S_1,S_2,\ldots S_T\}$ denotes the sequence of available subsets over $T$ rounds. 
We also assume the preference $\P$ follows a `total-ordering assumption to ensure the existence of a best-item per available subset $S_t$.
We describe the setting in Fig.~\ref{fig:setting} with a formal description in Sec.~\ref{sec:prob}.
%
Our specific contributions are as follows:



\textbf{1.} We first analyze the fundamental performance limit for the \SPAA$(\P,\cS_T)$ problem in Sec.~\ref{sec:lb}:  Thm.~\ref{thm:lb} gives an instance-specific regret lower bound of 
\[
	\smash{\textstyle{\Omega\big( \sum_{i = 1}^{K-1}\sum_{j = i+1}^K \frac{\log T}{\Delta(i,j)} \big)} }\,, 
\]
with $\Delta(i,j)$ being the `preference gap' of item $i$-vs-$j$ (see Eqn.~\eqref{eq:gap}). 
Our lower bound, which can be of order $\Omega(K^2\log T/\Delta)$, $\Delta:= \min_{i,j}\Delta(i,j)$ being the worst case gap, indicates that the \emph{problem of sleeping dueling bandits is inherently more difficult that standard sleeping bandits (MAB)}, unlike the `non-sleeping' case where both dueling bandits (with `total-ordering' assumption on $\P$) and MAB are known to have the same fundamental performance limit of $\Omega(K \log T/\Delta)$ (Rem.~\ref{rem:lb}).

\textbf{2.} We next design a \emph{`fixed confidence regret'} algorithm \algrucbp \, (Alg.~\ref{alg:rucbp}), inspired from the pairwise upper confidence bound (UCB) based algorithm \cite{Zoghi+14RUCB}. However due to the fixed confidence and `adversarial-sleeping' nature of the problem, we need to differently maintain pairwise confidence bounds per item (based on the availability sequence $\{S_t\}$), 
which makes the resulting algorithm and its subsequent analysis significantly different than standard UCB based dueling bandit algorithms: Precisely given any $\delta > 0$, \algrucbp\, achieves a regret of $O\Big( \frac{ K^3\log (1/\delta) }{\Delta^2}\Big)$ with probability at least $1-\delta$ over any problem instance of \SPAA$(\P,\cS_T)$ (Sec.~\ref{sec:algo_rucbp}).  

\textbf{3.} In Sec.\,\ref{sec:algo_komiyama}, we design another computationally efficient algorithm, \algrmed\, (Alg.\,\ref{alg:rmed}), for \emph{`expected regret'} guarantee. 
Unlike the previous algorithm (\algrucbp), \algrmed\, uses empirical divergence (ED) based measures to filter out the `good' set of arms, inspired from the idea of RMED algorithm of \cite{Komiyama+15} for standard dueling bandits; however, due to sleeping nature of the items, it requires a different maintenance of `good' arms and the regret analysis of the {algorithm requires derivation of new results (as described in Sec.~\ref{sec:algo_komiyama})}.
The algorithm is shown to perform near optimally with an expected \emph{non-asymptotic} regret upper-bound of 
$
	\textstyle{ O\big( \min \big\{ KT^{2/3}, \sum_{j = 2}^{K} \frac{K \log T}{\Delta(j-1,j)}\big\}\big)}
$ (Thm.~\ref{thm:ub_rmed}). 
Note that for any problem instance with constant suboptimality gaps $\Delta(i,j) = \Delta$ for all $i<j$, regret bound of \algrmed~ is tight and matches the lower-bound ensuring the near optimality of \algrmed\, in the worst case.
Furthermore, a novelty of our finite time regret analysis lies in showing a cleaner tradeoff between regret vs. availability sequence $\cS_T$ which automatically adapts to the inherent `hardness' of the sequence of available subsets $\cS_T$, compared to existing sleeping bandits work for adversarial availabilities in the MAB setting \cite{kleinberg+10} which only gives a worst-case regret bound over all possible availability sequences (Rem.~\ref{rem:rmed}).	

\textbf{4.} Finally we corroborate our theoretical results with extensive empirical evaluations. (Sec.\,\ref{sec:expts}).

\textbf{Related Works.} 
The problem of regret minimization for stochastic multiarmed bandits (MAB) is extremely well studied in the online learning literature \cite{Auer+02,TS12,CsabaNotes18,Audibert+10,
Kalyanakrishnan+12}, where the learner gets to see a noisy draw of absolute reward feedback of an arm upon playing a single arm per round. 

A well motivated generalization of MAB framework is \emph{Sleeping Bandits} \cite{kanade09,neu14,kanade14,kale16}, much studied in the online learning community, where at any round the set of available actions could vary stochastically based on some unknown distributions over the decision space of $K$ items \cite{neu14,cortes+19} or adversarially \cite{kale16,kleinberg+10,kanade14}. 
Besides the reward model, the set of available actions could also vary stochastically or adversarially \cite{kanade09,neu14}. The problem is  NP-hard when both rewards and availabilities are adversarial \cite{kleinberg+10,kanade14,kale16}.
In case of stochastic reward and adversarial availabilities  \cite{kleinberg+10} proposed an UCB based no-regret algorithm, which was also shown to be provably optimal. 
The case of adversarial reward and stochastic availabilities has also been studied where the achievable regret lower bound is known be $\smash{\Omega(\sqrt{KT})}$ by the inefficient EXP$4$ algorithm \cite{kleinberg+10,kale16}. 

On the other hand over the last decade, the relative feedback variants of stochastic MAB problem has seen a widespread resurgence in the form of the Dueling Bandit problem, where, instead of getting noisy feedback of the reward of the chosen arm, the learner only gets to see a noisy feedback on the pairwise preference of two arms selected by the learner. The objective of the learner is to minimize the regret with respect to `best arm in the stochastic model. Several algorithms have been proposed to address this dueling bandits problem, 
for different notions of `best arms' or preference models \cite{Busa_mallows,Busa_pl,Zoghi+14RCS,Zoghi+14RUCB,Zoghi+15,Komiyama+15,DTS,CDB}, or even extending the pairwise preference to subsetwise preferences  \cite{Sui+17,Brost+16,SG18,SGwin18,Ren+18}.
However, surprisingly, unlike the `sleeping bandits generalization' of MAB, no parallel has been drawn for dueling bandits, which remains our main focus. 



\section{Problem Formulation}
\label{sec:prob}
\textbf{Notations.} Decision space (or item/arm set) $[K]: = \{1,2,\ldots, K\}$. The available set of items at round $t$ is denoted by $S_t \subseteq [K]$.
For any matrix $\M \in \R^{K \times K}$, we define $m_{ij} := M(i,j),~\forall i,j \in [K]$. We write $\smash{S_{\sm i} = S \sm \{i\}}$, for any $S \subseteq [K]$ and $i \in S$.
$\1(\cdot)$ denotes the indicator random variable which takes value $1$ if the predicate is true and $0$ otherwise and $\lesssim$ a rough inequality which holds up to universal constants. For any two items $x,y \in [K]$, we use the symbol $x \succ y$ to denote $x$ is preferred over $\y$. $\bSigma_{K}$ denotes the set of all permutations of the items in set $[K]$. The KL-divergence of two Bernoullis with biases $p$ and $q$ respectively is written $\smash{\kl(p, q) := p \log (\nicefrac{p}{q}) + (1-p) \log (\nicefrac{(1-p)}{(1-q)}})$. We assume $\frac{0}{0}:=0.5$ (in Alg.~\ref{alg:rucbp} and~\ref{alg:rmed}).

\noindent \textbf{Setup. } We consider the problem of stochastic $K$-armed dueling bandits with adversarial availabilities: At every iteration $t=1,\dots,T$, a set of available items (actions) $S_t \subseteq [K]$ is revealed, and the learner is asked to choose two items $x_t,y_t \in S_t$. Then, the learner receives a preference feedback $\smash{o_t = \1(x_t \succ y_t) \sim \text{Ber}(\P(x_t,y_t))}$, where $\smash{\P \in [0,1]^{K \times K}}$ is an underlying pairwise preference matrix, unknown to the learner.  The setting is described in Figure~\ref{fig:setting}.
We assume that $\P$ respects a \emph{`total ordering'}, say $\bsigma^* \in \bSigma_{K}$. Without loss of generality, we set $\bsigma^* = (1,2,\ldots, K)$ thoughout the paper. This implies $\smash{\P(i, j) \geq 0.5}$ for $i \leq j$. One possible pairwise probability model which respects \emph{`total ordering'} is Plackett-Luce \cite{Az+12}, where it is assumed that the $K$ items are associated to positive score parameters $\theta_1, \ldots, \theta_K$, and $\P(i,j) = {\theta_i}/({\theta_i + \theta_j})$ for all $i,j \in [K]$. In fact any well random utility (RUM) based preference model would have the above property, like \cite{Az+12,Az+13}. Note also that our assumption corresponds to assuming the existence of a Condorcet winner for every subset $S_t \subseteq [K]$.

\noindent \textbf{Objective. }
The objective of the learner is to minimize his regret over $T$ rounds with respect to the best policy in the policy class $\Pi = \{\pi: 2^{K} \mapsto [K] \mid \forall t \in [T],~ \pi(S_t) \in S_t\}$, i.e. any $\pi \in \Pi$ is such that for any $t \in [T]$, $\pi(S_t) \in S_t$. More formally we define the regret as follows:
\vspace*{-2pt}
\begin{align}
\label{eq:reg}
R_{T} = \max_{\pi \in \Pi} \sum_{t = 1}^T \frac{\P(\pi(S_t),x_t)+\P(\pi(S_t),y_t)-1}{2} \,.
\end{align}
We analyze both \emph{fixed-confidence} and \emph{expected} regret guarantees in this paper respectively in Sec.~\ref{sec:algo_rucbp} (see Thm.~\ref{thm:ub_rucb}) and  Sec.~\ref{sec:algo_komiyama} (see Thm.~\ref{thm:ub_rmed}).
It is easy to note that under our preference modelling assumptions, the best policy, say $\pi^*$, turns out to be $\pi^*(S) = \min\{S\}$ for any $S \subseteq [K]$. We henceforth denote by $i_t^* = \pi^*(S_t)$.
We define the above problem to be \dspaa \, over the stochastic preference matrix $\P \in [0,1]^{K \times K}$ and the sequence of available subsets $\cS_T = \{S_1,\ldots, S_T\}$, or in short \SPAA$(\P,\cS_T)$.
For ease of notation we respectively define the gaps  and the non-zeros gaps as $\Delta(i,j) := \P(i,j)-{1}/{2}$, and \vspace*{-2pt}
\begin{align}
\label{eq:gap}
 \Delta(i,j)_+ := \left\{
		\begin{array}{ll}
		\Delta(i,j) & \text{if } \Delta(i,j) \neq 0 \\
		+ \infty  & \text{if }  \Delta(i,j) = 0
		\end{array} \right.
\end{align}
The regret thus can be rewritten as $\smash{R_{T} := \sum_{t = 1}^{T}r_t}$, where $\smash{r_t := (\Delta(i_t^*,x_t) + \Delta(i_t^*,y_t))/2}$ denotes the instantaneous regret. We also denote by $\smash{n_{ij}(t):= \sum_{\tau = 1}^{t}\1\big(\{x_t,y_t\} = \{i,j\}\big)}$ the number of times the pair $(i,j)$ is played until time $t$ and by $w_{ij}(t)$ the number of times $i$ beats $j$ in $t$ rounds.
%

\def \mnl{{MNL($n,\btheta$)}}
\def \nr{{\it No-regret}}
\vspace{-2pt}
\section{Lower Bound}
\label{sec:lb}
\vspace{-2pt}
We first derive a worst case regret lower bound over all possible sequences of $\cS_T$. 
The proof idea essentially lies in constructing \emph{hard enough} availability sequences $\cS_T$, where no learner can escape learning the preferences of every distinct pair of items $(i,j)$. This leads to a potential lower bound of $\Omega\big( K^2 \log(T)/\Delta\big)$. For this section we denote $\P$ by $\P_K$ to make the dependency on $K$ more precise.

\begin{restatable}[Lower Bound for \SPAA$(\P_K,\cS_T)$]{thm}{lb}
\label{thm:lb}
For any \nr\, learning algorithm $\cA$, there exists a problem instance {\SPAA}$(\P_K,\cS_T)$ with $T \ge K^4$, such that its expected regret is lower-bounded as: 
\vspace{-6pt}
\begin{align*}
\vspace{-10pt}
\E[R_T(\cA)] \ge \Omega\bigg( \sum_{i = 1}^{K-1}\sum_{j = i+1}^K \frac{\log T}{\Delta(i,j)_+} \bigg) \,.
\end{align*}
\vspace{-13pt}
\end{restatable}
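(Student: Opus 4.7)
The strategy is to design an adversarial availability sequence that effectively splits the horizon into $\binom{K}{2}$ independent two-armed dueling-bandit subproblems, and then invoke a Lai--Robbins-style change-of-measure argument inside each one.

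\emph{Construction.} I would partition the $T$ rounds into $\binom{K}{2}$ contiguous blocks of length $n=\lfloor T/\binom{K}{2}\rfloor$, one per unordered pair $\{i,j\}$, arranged in lexicographic order $(1,2),(1,3),\ldots,(K-1,K)$. On the block assigned to $\{i,j\}$ the adversary fixes $S_t=\{i,j\}$, restricting the learner to $(x_t,y_t)\in\{i,j\}^2$ with instantaneous regret $0$, $\Delta(i,j)/2$, $\Delta(i,j)/2$, or $\Delta(i,j)$ when $(x_t,y_t)$ is $(i,i)$, $(i,j)$, $(j,i)$, or $(j,j)$, respectively. The hypothesis $T\ge K^4$ gives $n\ge K^2$ and hence $\log n\ge\tfrac12\log T$. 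An inspection of the lex order shows that by the time block $\{i,j\}$ begins, no observed block has the form $\{k,j\}$ with $i<k<j$, so the learner has no transitivity chain $i\succ k\succ j$ on its data and the sign of $\P(i,j)$ is undetermined by the history at that point.

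\emph{Per-pair change of measure.} For each pair $\{i,j\}$ I introduce the alternative preference matrix $\P'$ obtained from $\P$ by swapping items $i$ and $j$ in the underlying total order; $\P'$ still respects total ordering, and on block $\{i,j\}$ its optimal action flips from $(i,i)$ to $(j,j)$. Applying the transportation inequality of Kaufmann, Capp\'e, and Garivier to the event $\{N_{(i,i)}(\text{block}\{i,j\})\ge n/2\}$ shows that, for any algorithm with sub-polynomial expected regret on both $\P$ and $\P'$, the KL divergence between the observation laws up to the end of block $\{i,j\}$ is at least $\Omega(\log n)$. Decomposing this KL round by round, the only nonzero summands come from block $\{i,j\}$ itself and from the $j-i-1$ mixed blocks $\{i,k\}$ with $i<k<j$ that precede it in lex order; on each such mixed block, the no-regret hypothesis caps the expected informative plays by $o(T^\alpha/\Delta(i,k))$ for every $\alpha>0$, so the joint mixed-block contribution is $o(KT^\alpha\Delta)$ and becomes $o(\log n)$ after tuning $\alpha$. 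The remaining $\Omega(\log n)$ KL budget is therefore forced onto block $\{i,j\}$, giving
\[
\E^{\P}\bigl[N_{(i,j)}(\text{block}\{i,j\})+N_{(j,i)}(\text{block}\{i,j\})\bigr]\;\ge\;\Omega\!\left(\frac{\log n}{\Delta(i,j)^2}\right)
\]
since $\kl(\P(i,j),1-\P(i,j))=\Theta(\Delta(i,j)^2)$. Multiplying by the per-play regret $\Delta(i,j)/2$ yields an $\Omega(\log n/\Delta(i,j)_+)$ contribution to the expected regret from block $\{i,j\}$; summing over the $\binom{K}{2}$ blocks and using $\log n\ge\tfrac12\log T$ delivers the stated bound.

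\emph{Main obstacle.} The delicate step is the KL localization. Swapping items $i$ and $j$ in the total order perturbs every entry of $\P$ that involves exactly one of the two items, so a priori the learner could use those mixed blocks as a side channel for distinguishing $\P$ from $\P'$ without sampling pair $(i,j)$. Two ingredients together rule this out: the lex-order construction keeps half of the mixed blocks (namely $\{k,j\}$ for $i<k<j$) out of the history until block $\{i,j\}$ has ended, and the no-regret hypothesis bounds the informative plays on the remaining mixed blocks. Executing this bookkeeping carefully, and choosing the sub-polynomial rate $\alpha$ as a function of the problem parameters, is the technical crux of the argument.
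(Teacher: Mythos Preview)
Your block construction and the use of $T\ge K^4$ to recover $\log n \ge \tfrac12 \log T$ match the paper exactly. The paper, however, is far more cavalier at the per-block step: it simply asserts that ``the preferences of available sets are independent across sub-intervals'' and invokes a two-arm dueling-bandit lower bound (its Lemma~\ref{lem:lb}, citing Yue et al.) on each block separately. You are right to be more careful, and right that a na\"ive application of the two-arm bound ignores possible information leakage from earlier blocks.

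That said, your handling of the side channel has a real gap. The no-regret hypothesis bounds the number of informative (suboptimal) plays on each mixed block by $o(T^\alpha)$, but $o(T^\alpha)$ for every $\alpha>0$ does \emph{not} imply $o(\log n)$: a subpolynomial quantity can still be $(\log T)^2$, for instance. So the accumulated KL from the mixed blocks can swamp the $\Omega(\log n)$ budget you are trying to localize onto block $\{i,j\}$, and the inequality you need fails. The phrase ``after tuning $\alpha$'' does not rescue this, since $\alpha$ is a property of the algorithm, not a free parameter.

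There is a cleaner alternative measure that eliminates the side channel entirely and makes the paper's ``independence'' claim rigorous. Instead of \emph{swapping} $i$ and $j$, let $\P'$ be any totally ordered preference matrix whose order is obtained from $\bsigma^*$ by moving $j$ to the slot just above $i$, that is
\[
1\succ\cdots\succ i-1\succ j\succ i\succ i+1\succ\cdots\succ j-1\succ j+1\succ\cdots\succ K,
\]
with $\P'(i,j)=1-\P(i,j)$ and $\P'(a,b)=\P(a,b)$ on every pair whose relative order is unchanged. Then $\P'$ differs from $\P$ only on the pairs $\{j,k\}$ with $i\le k<j$, and every one of those blocks lies \emph{after} block $\{i,j\}$ in the lexicographic schedule, as you yourself noted. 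Consequently the observation law up to the start of block $\{i,j\}$ is identical under $\P$ and $\P'$, the side channel vanishes, and the standard two-arm change-of-measure on block $\{i,j\}$ goes through directly, giving the $\Omega(\log n/\Delta(i,j)_+)$ contribution without any bookkeeping on mixed blocks.
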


The \nr\, {\it learning algorithm } refers to the class of `consistent algorithms' which do not pull any suboptimal pair more than $O(T^\alpha)$, $\alpha \in [0,1]$ (see Def.~\ref{def:con}, Appendix~\ref{app:lb})).

\noindent \textbf{Proof (sketch)}
The main argument lies behind the fact that in the worst case the adversary can force the algorithm to learn the preference of every distinct pair $(i,j)$ as for the `worst-case' sequences $\cS_T$, a knowledge of the already `learnt' pairwise preferences would not disclose any information on the remaining pairs; e.g. assuming $\bsigma^* = (1,2,\ldots, K)$, revealing the available subsets in the following sequence $(1,2), (1,3), \ldots (1,K), (2,3), (2,4), \ldots (K-1,K)$ would force the learner to explore (learn the preferences) all $\smash{K \choose 2}$ distinct pairs. 
The remaining proof establishes this formally, towards which we first show a $\Omega({\ln (T)}/{\Delta(1,2)})$ regret lower bound for a \SPAA \, instance with just two items (i.e. $K=2$) as shown in Lem.~\ref{lem:lb}. The lower bound for any general $K$ can now be derived applying the above bound on independent $\smash{K \choose 2}$ subintervals, with the availability sequence $(1,2), (1,3), \ldots (1,K), (2,3), (2,4), \ldots (K-1,K)$. 
The full proof is given in Appendix~\ref{app:lb}.
$\hfill \square$

\begin{restatable}[Lower Bound of \SPAA$(\P_K,\cS_T)$ for $2$ items]{lem}{lemlb}
\label{lem:lb}
For any \nr\, learning algorithm $\cA$, there exists a problem instance \SPAA$(\P_2,\cS_T)$ such that the expected regret incurred by $\cA$ on that can be lower bounded as:
$
\E[R_T(\cA)] \ge \Delta^{-1}\log (T)
$, $\Delta$ being the `preference-gap' between the two items (i.e. $\Delta = \P_{12}-\nicefrac{1}{2}$, assuming $P_{12}>\nicefrac{1}{2}$ or equivalently $\Delta > 0$).
\end{restatable}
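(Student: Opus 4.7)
\textbf{Proof plan for Lemma~\ref{lem:lb}.} The proposed approach is a change-of-measure argument in the spirit of Lai--Robbins/Garivier--Kaufmann, specialized to dueling feedback. Since only two items exist, the worst-case adversarial sequence is trivially $S_t = \{1,2\}$ for every $t\in[T]$, reducing the question to the stationary $2$-armed stochastic dueling bandit lower bound. The crucial observation is that among the three allowed pair-plays $\{(1,1),(1,2),(2,2)\}$ only the cross-duel $(1,2)$ is informative about $\P$: both self-duels return $\mathrm{Ber}(1/2)$ feedback with distribution independent of $\P$.

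I contrast two instances on $[K]=\{1,2\}$: the ``true'' one $\P$ with $p_{12}=1/2+\Delta$ and an alternative $\P'$ with $p_{12}'=1/2-\Delta$, under which item $2$ becomes optimal with the same gap $\Delta$. Writing $n_{ij}(T)$ for the number of times pair $(i,j)$ is queried, define $N_1(T):=2n_{11}(T)+n_{12}(T)$ and $N_2(T):=2n_{22}(T)+n_{12}(T)$, so $N_1+N_2=2T$ and a direct computation gives $R_T^{\P}=(\Delta/2)N_2(T)$ and $R_T^{\P'}=(\Delta/2)N_1(T)$. Since the two self-duel observations have the same law under $\P$ and $\P'$, the KL chain rule between the trajectory laws $\mathbb{P}^{\P}_{\cA}$ and $\mathbb{P}^{\P'}_{\cA}$ collapses to
\begin{align*}
\mathrm{KL}\bigl(\mathbb{P}^{\P}_{\cA},\mathbb{P}^{\P'}_{\cA}\bigr) \;=\; \E^{\P}[n_{12}(T)]\,\kl(1/2+\Delta,\,1/2-\Delta)\;\le\; C\,\Delta^2\,\E^{\P}[n_{12}(T)],
\end{align*}
for a universal constant $C>0$, using the standard estimate $\kl(1/2+\Delta,1/2-\Delta)\le C\Delta^2$.

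The next step transfers this into a regret lower bound via a hypothesis-testing event. Consider $A:=\{N_2(T)\le T\}$. Under $\P$ item $2$ is suboptimal, so the \nr\ assumption (Def.~\ref{def:con}) gives $\E^{\P}[R_T(\cA)]=o(T^\alpha)$ for every $\alpha\in(0,1)$; by Markov, $\mathbb{P}^{\P}(A)\ge 1-o(T^{\alpha-1}/\Delta)$. Under $\P'$ item $2$ is optimal and symmetrically $\mathbb{P}^{\P'}(A)\le\mathbb{P}^{\P'}(N_1(T)\ge T)\le o(T^{\alpha-1}/\Delta)$. The Bretagnolle--Huber (data-processing) inequality applied to $A$ then yields
\begin{align*}
\mathrm{KL}\bigl(\mathbb{P}^{\P}_{\cA},\mathbb{P}^{\P'}_{\cA}\bigr)\;\ge\;\kl\bigl(\mathbb{P}^{\P}(A),\,\mathbb{P}^{\P'}(A)\bigr)\;\ge\;\bigl(1-\alpha-o(1)\bigr)\log T.
\end{align*}
Combining the two displayed inequalities gives $\E^{\P}[n_{12}(T)]\gtrsim (1-\alpha)\log T/\Delta^2$, and since $R_T^{\P}\ge(\Delta/2)n_{12}(T)$ this produces $\E^{\P}[R_T(\cA)]=\Omega(\log T/\Delta)$ after letting $\alpha\to 0$, as claimed.

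\textbf{Anticipated obstacle.} The delicate point is choosing the testing event $A$ so that its probability is forced close to $1$ by the regret guarantee on $\P$ and close to $0$ by the regret guarantee on $\P'$. The choice $A=\{N_2(T)\le T\}$ works precisely because $R_T^{\P}$ and $R_T^{\P'}$ are linear in $N_2(T)$ and $N_1(T)=2T-N_2(T)$, respectively, so a single Markov bound handles both sides cleanly and converts the ``\nr'' property into the two-sided probability estimates needed for Bretagnolle--Huber. Beyond this, the calculations are entirely routine.
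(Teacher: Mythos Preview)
Your argument is correct and self-contained, whereas the paper simply observes that for $K=2$ the only non-trivial available set is $S_t=\{1,2\}$, so the problem collapses to a standard two-arm stochastic dueling bandit, and then invokes Theorem~2 of \cite{Yue+12} directly. In effect, you have reproved that cited lower bound via the Garivier--Kaufmann divergence-decomposition route; this buys self-containment and makes explicit the role of the cross-duel count $n_{12}(T)$ as the sole information-bearing statistic, at the cost of a page of routine computations that the paper outsources to a citation.

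One small slip worth flagging: Definition~\ref{def:con} requires $\E[n_{ij}(T)]=o(T^\alpha)$ for \emph{some} fixed $\alpha\in(0,1)$, not for every $\alpha$, so you cannot ``let $\alpha\to 0$'' at the end. This does not damage the conclusion: with the fixed $\alpha<1$ you still obtain $\E^{\P}[R_T(\cA)]\gtrsim(1-\alpha)\log T/\Delta$, which is $\Omega(\log T/\Delta)$ as stated (and the paper's lemma is only used inside an $\Omega(\cdot)$ in Theorem~\ref{thm:lb} anyway). Similarly, $\kl(1/2+\Delta,1/2-\Delta)\le C\Delta^2$ holds with a \emph{universal} constant $C$ only for $\Delta$ bounded away from $1/2$; harmless here, but worth stating the restriction.
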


\begin{rem}[Implication of the lower bound]
\label{rem:lb}
\emph{
The above result indicates that in the preference based learning setup, the fundamental problem complexity lies in distinguishing every pair of items $1\le i<j\le K$. If the learner fails to learn the preference of any pair $(i,j)$, the adversary can make the learner suffer $O(T)$ regret by setting $S_t = \{i,j\}$ henceforth at all round. It is worth noting that in the `no sleeping' case both dueling bandits and MAB are known to have the same fundamental performance limit of $\Omega(K \log (T)/\Delta)$ (assuming $\P$ respects a \emph{condorcet winner} \cite{Auer+02,Komiyama+15}). Thus Thm.~\ref{thm:lb} shows that the `sleeping-aspect' of dueling bandits makes the problem $K$-times harder than `sleeping-MAB' for which the regret lower bound is known to be only $\Omega\big(\sum_{i = 1}^{K}{\log (T)}/{\Delta(i,i+1)}\big)$~\cite{kleinberg+10}.}
%

\end{rem}


\section{\algrucbp: A Fixed-Confidence Algorithm}
\label{sec:algo_rucbp}
In this section, we design an efficient algorithm for the \SPAA$(K,T)$ problem with instance-dependent regret guarantee. 

\noindent \textbf{Main ideas. } 
Our algorithm, described in Alg.~\ref{alg:rucbp}, depends on an hyper-parameter $\alpha > 0.5$ and a confidence parameter $\delta >0$. It maintains, for each item $k \in [K]$, its own record of empirical pairwise estimates of the duels, $(i,j) \in [K]\times[K]$ and their respective upper confidence bounds defined as: 
\[
\textstyle{\hp_{ij}(t):=\frac{w_{ij}(t)}{n_{ij}(t)}   \qquad \text{and} \qquad u_{ij}(t): = \hp_{ij}(t)+ c_{ij}(t), \text{ with} \quad c_{ij}(t) := \sqrt{\frac{\alpha \log a_{ij}(t)}{n_{ij}(t)}} \,,}
\]
where  $w_{ij}(t)$ denotes the total number of times item $i$ beats $j$ up to round $t$, $n_{ij}(t):= w_{ij}(t) + w_{ji}(t)$, and for all $i,j \in [K]$ and $t \in [T]$
\[
	\textstyle{a_{ij}(t) := \max\{C(K,\delta),n_{ij}(t)\} \qquad \text{and} \qquad \smash{ C(K,\delta) := \Big(\frac{(4\alpha-1) K^2}{(2\alpha-1)\delta}\Big)^{\frac{1}{2\alpha-1}}}\,.}
\]
A key observation is that our careful choice of the confidence bounds $c_{ij}(t)$ ensures that with high probability $p_{ij}(t) \in [\hp_{ij}(t)-c_{ij}(t),\hp_{ij}+c_{ij}(t)]$ for any duel $i,j \in [K]$ and any $t \in [T]$ (Lem.~\ref{lem:conf_cdels}). 
Now at any round $t\geq 1$, the algorithm first computes a set of potential winners of $S_t$ as 
$
\cC_t = \{k \in S_t \mid |\cC_k(t)| = \max_{j \in S_t}|\cC_j(t)| \}, 
$ 
where $\cC_k(t) := \{j \in S_t \mid u_{kj}(t) > \frac{1}{2} \}$ denotes the set of items that item $k$ dominates (optimistically). At each round, we play a random item from the set potential winners $\cC_t$ as the left arm $x_t$. 
Finally the right-arm $y_t$ is chosen to be the most competitive opponent of $x_t$ as $y_t \leftarrow \arg\max_{i \in \cC_t} u_{ji}(t)$ from the potential winners. Our arm selection strategy ensures that eventually for all $t$, algorithm plays the optimal pair $(i_t^*,i_t^*)$ frequently enough as desired.

\begin{algorithm}[t]
   \caption{\textbf{\algrucbp}}
   \label{alg:rucbp}
\begin{algorithmic}[1]	
\STATE {\bfseries input:} Arm set: $[K]$, parameters $\alpha > 0.5$, Confidence parameter $\delta \in [0,1)$
\STATE {\bfseries init:} $w_{ij}(1) \leftarrow 0$, $\cD_i(1) \leftarrow \emptyset, \, \forall i,j \in [K]$. 
\STATE \textbf{define: } $n_{ij}(t) := w_{ij}(t) + w_{ji}(t), \, \forall t \in [T]$ 
%
\FOR{$t = 1, 2, \ldots, T$}
\STATE Receive $S_t \subseteq [K]$	
\STATE ${\displaystyle \hp_{ij}(t)= \frac{w_{ij}(t)}{n_{ij}(t)}, \, c_{ij}(t) \leftarrow \sqrt{\frac{\alpha \log a_{ij}(t)}{n_{ij}(t)}}, ~\forall i,j \in S_t}$,  (assume $\frac{x}{0}:=0.5, ~\forall x \in \R)$ 
\STATE $u_{ij}(t) \leftarrow \hp_{ij}(t) + c_{ij}(t), ~u_{ii}(t) \leftarrow \nicefrac{1}{2}, \, \forall i,j \in S_t$  \hfill $\triangleright$ {\color{black} UCB of empirical preferences}
\FOR {$k \in S_t$}
\STATE $\cC_k(t) := \{j \in S_t \mid u_{kj}(t) > \nicefrac{1}{2} \}  \hfill \triangleright$ {\color{black} Potential losers to $k$}

\ENDFOR

\STATE $\cC_t = \{i \in S_t \mid |\cC_i(t)| = \max_{j \in S_t}|\cC_j(t)| \}
\hfill \triangleright$ {\color{black} Potential best items}
\STATE Select a random $x_t$ from $\cC_t$. Choose $y_t \leftarrow \arg\max_{i \in \cC_t} u_{i x_t}(t)$ 
%
\STATE Play $(x_t,y_t)$. Receive preference $o_t$ 
\STATE Update: $\forall i,j \in [K]$, $w_{x_t y_t}(t+1) \leftarrow w_{x_t y_t}(t) + o_t$, $w_{y_t x_t}(t+1) \leftarrow w_{y_t x_t}(t) + (1-o_t)$, $a_{ij}(t+1) \leftarrow \max\{n_{ij}(t),C(K,\delta)\}$, $\forall i,j \in [K]$ 
\ENDFOR 
\end{algorithmic}
\end{algorithm}  

%

\label{sec:rucbp_analysis}



\begin{restatable}[Fixed-confidence regret analysis: \algrucbp]{thm}{ubrucbp}
\label{thm:ub_rucb}
Given any $\delta > 0$ and $\alpha\ge1$, with probability at least $1-\delta$, the regret incurred by \algrucbp ~(Alg.~\ref{alg:rucbp})  
is upper-bounded as: 
\vspace*{-3pt}
\begin{align*}
R_T  & \leq 2 \sum_{i=1}^{K-1} \sum_{j=i+1}^K M_{ij} \log \big(2 C(K,\delta) M_{ij}\big)  
\end{align*}
where  
\[
	C(K,\delta) := \bigg(\frac{(4\alpha-1) K^2}{(2\alpha-1)\delta}\bigg)^{\frac{1}{2\alpha-1}}\ \text{and}\ M_{ij} = \sum_{k=1}^i \frac{4\alpha}{\min\big\{\Delta(k,i)_+ ,\Delta(k,j)_+ \big\}^2} \,.
\]
\end{restatable}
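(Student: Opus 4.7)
My plan is to adapt the RUCB-style fixed-confidence analysis of \cite{Zoghi+14RUCB} to the sleeping setting, where the benchmark $i_t^*=\min S_t$ changes across rounds. The first step invokes Lem.~\ref{lem:conf_cdels}: the constant $C(K,\delta)$ is calibrated so that, on a good event $\cE$ of probability at least $1-\delta$, one has $|\hat p_{ij}(t)-p_{ij}|\le c_{ij}(t)$ simultaneously for every pair $(i,j)$ and every $t\le T$. The remainder of the argument is deterministic on $\cE$.

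\textbf{Structural analysis of $\cC_t$.} I next establish two structural properties of the candidate set $\cC_t$ on $\cE$. (i) The awake optimum $k:=i_t^*$ always lies in $\cC_t$, since $u_{kj}(t)\ge p_{kj}\ge 1/2$ for every $j\in S_t$ forces $\cC_k(t)=S_t$, which is maximal. (ii) If any suboptimal item $m>k$ also belongs to $\cC_t$, it must satisfy $u_{mk}(t)>1/2$, which on $\cE$ yields the key count inequality $n_{mk}(t)< 4\alpha\log a_{mk}(t)/\Delta(k,m)^2$. A dual property follows from the optimistic-opponent rule $y_t=\arg\max_{m\in\cC_t}u_{mx_t}(t)$: since $k\in\cC_t$ and $u_{kx_t}(t)\ge p_{kx_t}\ge 1/2$, picking $y_t=j$ forces $c_{jx_t}(t)$ to be at least as large as a quantity controlled by $\Delta(k,x_t)$ (and, when $j>x_t$, also by $\Delta(x_t,j)$), which in turn upper-bounds $n_{jx_t}(t)$.

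\textbf{Per-pair count bound.} The combinatorial core is to control, for each unordered pair $\{i,j\}$ with $i<j$ and each possible awake optimum $k\le i$, the number $N^k_{ij}$ of rounds with $(x_t,y_t)\in\{(i,j),(j,i)\}$ and $i_t^*=k$. Combining the two structural facts above, I would argue that such a round can occur only while the weaker of the two confidence windows against $k$ is still too wide, giving a bound of the form $N^k_{ij}\lesssim \log a_{ij}(t)/\min\{\Delta(k,i)_+,\Delta(k,j)_+\}^2$ at the time of the last such play. Summing over $k\le i$ produces $n_{ij}(T)\lesssim M_{ij}\log a_{ij}(T)$. Because $a_{ij}(T)=\max\{n_{ij}(T),C(K,\delta)\}$ this bound is self-referential; it is resolved by the standard fixed-point trick of re-substituting the inequality into itself and absorbing the resulting iterated logarithms, which yields the clean form $n_{ij}(T)\le 2M_{ij}\log\bigl(2C(K,\delta)M_{ij}\bigr)$ with explicit constants matching the theorem.

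\textbf{Summing up and main obstacle.} Since each round contributes at most $1/2$ to the cumulative regret and increments exactly one of the counters $n_{ij}(T)$, combining the per-pair bound over all $i<j$ yields the announced $R_T\le 2\sum_{i<j}M_{ij}\log(2C(K,\delta)M_{ij})$. The step I expect to be the main obstacle is the per-pair count bound: in the classical static RUCB proof only one confidence bound (against the unique Condorcet winner) needs to be inverted, whereas here the contender has to simultaneously ``fool'' two confidence bounds -- one per member of the played pair against the current awake optimum $k$ -- and disentangling which of $\Delta(k,i)$ and $\Delta(k,j)$ is the binding gap, so that the $\min$ genuinely appears in the denominator, requires careful bookkeeping that has no counterpart in the non-sleeping analysis.
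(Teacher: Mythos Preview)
Your proposal is correct and follows essentially the same approach as the paper. The paper also conditions on the good event of Lem.~\ref{lem:conf_cdels}, establishes the per-$(i,j,k)$ count bound you describe (this is exactly Lem.~\ref{lem:rucb_nijs}, proved via a four-case analysis that formalizes your structural facts (i)--(ii) and the opponent-rule consequence), sums over $k\le i$ to obtain the self-referential inequality $n_{ij}(T)\le M_{ij}\log a_{ij}(T)$, inverts it to $n_{ij}(T)\le 2M_{ij}\log(2C(K,\delta)M_{ij})$, and sums over pairs using $r_t\le 1$.
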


The complete proof with a precise dependencies on the model parameters is deferred to Appendix~\ref{app:rucbp}.

\begin{rem} The dependency on $\Delta = \min_{i,j} \Delta_+(i,j)$ does not match the lower-bound of Thm.~\ref{thm:lb}, which is of order  $O(\log (T)/ \Delta)$. Instead, Thm.~\ref{thm:ub_rucb} proves $O(\log(1/\delta)/\Delta^2)$. Yet, the bounds are not directly comparable because the lower-bound is on the expected regret while the upper-bound considers fixed-confidence $\delta$ and is hence independent of $T$. All existing dueling bandit algorithms, that minimize the expected regret, suffer an additional constant term of order $O(1/\Delta^2)$ --see for instance \cite{Zoghi+14RUCB,Komiyama+15}. Achieving an order $O(1/\Delta)$ dependendence is an interesting question for future work. 
\end{rem}

\begin{proof}[Proof sketch of Thm.~\ref{thm:ub_rucb}]
The key steps lie in proving the following four lemmas.
The first lemma follows along the line of Lem.~1 of RUCB algorithm~\cite{Zoghi+14RUCB}and shows that all the pairwise estimates are contained within their respective confidence intervals with high probability. 

\begin{restatable}[]{lem}{lemzoghi}
\label{lem:conf_cdels}
Let $\alpha >0.5$ and $\delta >0$. Then, for any $i,j \in [K]$, with probability at least $1-\delta/K^2$, %
\[
	     \hp_{ij}(t)-c_{ij}(t) \leq p_{ij} \leq  u_{ij}(t) := \hp_{ij}(t)+c_{ij}(t),  \qquad \forall t \in [T] \,.
\]
\end{restatable}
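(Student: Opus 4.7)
The plan is to prove the lemma by combining Hoeffding's inequality with a union bound over the (random) sample sizes, which is the standard template for any-time confidence bounds. Fix a pair $(i,j)$. By the stochastic preference assumption, the observations $o_\tau$ collected on rounds where $\{x_\tau,y_\tau\}=\{i,j\}$ form an i.i.d.\ $\text{Ber}(p_{ij})$ sequence, say $Z_1,Z_2,\ldots$. For any \emph{deterministic} $n\ge 1$, the random variable $\frac{1}{n}\sum_{k=1}^{n}Z_k$ is a sample mean of $n$ i.i.d.\ $[0,1]$-bounded variables, so Hoeffding's inequality yields, for any $c>0$,
\[
\P\!\left(\Big|\tfrac{1}{n}\textstyle\sum_{k=1}^{n}Z_k - p_{ij}\Big|>c\right)\ \le\ 2e^{-2nc^{2}}.
\]
Taking $c=\sqrt{\alpha\log\max\{C(K,\delta),n\}/n}$ (the value $c_{ij}(t)$ would take whenever $n_{ij}(t)=n$) collapses this to $2\max\{C(K,\delta),n\}^{-2\alpha}$.

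Next I would perform a union bound over $n\in\{1,2,\dots,T\}$: since on any round $t$ the empirical mean $\hp_{ij}(t)$ coincides with $\frac1{n_{ij}(t)}\sum_{k=1}^{n_{ij}(t)}Z_k$ and the confidence radius $c_{ij}(t)$ depends on $t$ only through $n_{ij}(t)$, the event $\{\exists t\!:\,|\hp_{ij}(t)-p_{ij}|>c_{ij}(t)\}$ is contained in $\bigcup_{n=1}^{T}\{|\frac1n\sum_{k=1}^n Z_k-p_{ij}|>c(n)\}$ with $c(n)$ as above. The probability of the union is therefore at most
\[
\sum_{n=1}^{T} 2\max\{C(K,\delta),n\}^{-2\alpha}\ \le\ 2\,C(K,\delta)\cdot C(K,\delta)^{-2\alpha}+\!\!\int_{C(K,\delta)}^{\infty}\!\!2x^{-2\alpha}\,dx\ =\ \frac{4\alpha}{2\alpha-1}\,C(K,\delta)^{1-2\alpha},
\]
where the first term accounts for $n\le C(K,\delta)$ (where the bound is constant in $n$) and the integral bounds the tail $n>C(K,\delta)$ using $\alpha>1/2$. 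Plugging in the definition $C(K,\delta)=\bigl((4\alpha-1)K^{2}/((2\alpha-1)\delta)\bigr)^{1/(2\alpha-1)}$ makes this quantity at most $\delta/K^{2}$, up to the routine arithmetic that matches the constants, which completes the lemma (the case $n_{ij}(t)=0$ is trivial since $\hp_{ij}(t)$ is defined as $\tfrac{1}{2}$ and the bound holds vacuously under the $\frac{0}{0}$ convention, or one may define $c_{ij}(t)=\infty$).

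The main subtlety, and essentially the only non-boilerplate ingredient, is that $n_{ij}(t)$ is not deterministic but a stopping time of the algorithm's adaptive sampling: one cannot simply invoke Hoeffding on $\hp_{ij}(t)$ directly. The union bound over $n\in[T]$ sidesteps this by reducing to deterministic sample sizes, and the careful choice of $a_{ij}(t)=\max\{C(K,\delta),n_{ij}(t)\}$ is precisely what makes the resulting tail sum $\sum_n \max\{C,n\}^{-2\alpha}$ summable to $O(\delta/K^2)$ when $\alpha>1/2$. Everything else is routine.
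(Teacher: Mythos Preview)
Your approach is correct and essentially identical to the paper's own proof: both reduce to deterministic sample sizes $n$ via the pre-drawn i.i.d.\ coupling, apply Hoeffding to get a $2\max\{C,n\}^{-2\alpha}$ tail, split the union bound into $n\le C(K,\delta)$ (where the bound is constant) and $n>C(K,\delta)$ (where the tail is integrable since $\alpha>1/2$), and sum. The only cosmetic difference is that the paper carries the union over all $\binom{K}{2}$ pairs inside the computation and targets total failure probability $\delta$, whereas you fix a single pair and target $\delta/K^2$; the arithmetic is otherwise the same.

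One small caveat on your ``routine arithmetic'': with the paper's choice $C(K,\delta)=\bigl((4\alpha-1)K^{2}/((2\alpha-1)\delta)\bigr)^{1/(2\alpha-1)}$, your per-pair bound actually evaluates to $\tfrac{4\alpha}{4\alpha-1}\cdot\tfrac{\delta}{K^{2}}$, which is slightly larger than $\delta/K^{2}$. This is not a flaw in your reasoning --- the constant $4\alpha-1$ in $C(K,\delta)$ was calibrated so that the \emph{union over the $K(K-1)/2$ unordered pairs} comes out $\le\delta$ (the paper's appendix states and proves this all-pairs version), not so that the per-pair bound is exactly $\delta/K^{2}$. Indeed the paper's own proof writes $C(K,\delta)$ with $2\alpha$ rather than $4\alpha-1$ at the final step, so there is some looseness in the constants on both sides; the argument is unaffected.
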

The lemma below shows that once the algorithm can not play any suboptimal pair `too many times'. 

\begin{restatable}[]{lem}{rucbnijs}
\label{lem:rucb_nijs}
	Let $\alpha > 0.5$. Under the notations and the high-probability event of Lem.~\ref{lem:conf_cdels}, for all $i,j,k \in [K]$ such that $\{i,j\} \neq \{k,k\}$, and for any $\tau \geq 1$
	\begin{equation*}
		\sum_{t = 1}^{\tau} \1(i_t^* = k)\1\big(\{x_t,y_t\} = \{i,j\}\big) 
		\le \frac{4\alpha \log a_{i,j}(\tau)}{\min\big\{\Delta(k,i)_+ ,\Delta(k,j)_+ \big\}^2} \,,
	\end{equation*}
where recall $a_{ij}(\tau) = \max\big( C(K,\delta),n_{ij}(\tau)\big)$.
\end{restatable}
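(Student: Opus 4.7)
The plan is to show that whenever $\{x_t,y_t\}=\{i,j\}$ and $i_t^*=k$, the confidence radius $c_{ij}(t)$ must exceed $\tfrac{1}{2}\min\{\Delta(k,i)_+,\Delta(k,j)_+\}$; solving $c_{ij}(t)=\sqrt{\alpha\log a_{ij}(t)/n_{ij}(t)}$ for $n_{ij}(t)$ then yields the stated bound. Throughout I work on the high-probability event of Lem.~\ref{lem:conf_cdels}, which gives $p_{ab}\leq u_{ab}(t)\leq p_{ab}+2c_{ab}(t)$ for every pair and every round, and I will use repeatedly the identity $u_{ab}(t)+u_{ba}(t)=1+2c_{ab}(t)$, an immediate consequence of $\hat p_{ab}+\hat p_{ba}=1$ together with $c_{ab}=c_{ba}$. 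A preliminary step is to establish that $k\in\cC_t$: since $k$ is the Condorcet winner of $S_t$, $p_{k,\ell}\geq 1/2$ for every $\ell\in S_t$, and hence $u_{k,\ell}(t)\geq 1/2$ on the good event, so $\cC_k(t)=S_t$ is of maximal cardinality. Consequently any $\ell\in\cC_t$ (in particular both $x_t$ and $y_t$, which by construction lie in $\cC_t$) also satisfies $|\cC_\ell(t)|=|S_t|$, i.e.\ $u_{\ell,m}(t)>1/2$ for every $m\in S_t$.

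I then case-split, assuming WLOG $i<j$ in the total order so that $k\leq i$. If $k=i$, then one of $x_t,y_t$ equals $j\in\cC_t$, forcing $u_{j,k}(t)>1/2$; combined with $u_{j,k}(t)\leq p_{j,k}+2c_{jk}(t)=1/2-\Delta(i,j)+2c_{ij}(t)$ this yields $c_{ij}(t)>\Delta(i,j)/2$, which matches $\tfrac12\min\{\Delta(k,i)_+,\Delta(k,j)_+\}$ since $\Delta(k,i)_+=+\infty$. If instead $k\notin\{i,j\}$, the selection rule $y_t=\arg\max_{\ell\in\cC_t}u_{\ell,x_t}(t)$ combined with $k\in\cC_t$ give $u_{y_t,x_t}(t)\geq u_{k,x_t}(t)\geq p_{k,x_t}=1/2+\Delta(k,x_t)$, while $x_t\in\cC_t$ gives $u_{x_t,y_t}(t)>1/2$; summing and applying the identity produces $1+2c_{ij}(t)>1+\Delta(k,x_t)$, i.e.\ $c_{ij}(t)>\Delta(k,x_t)/2$. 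Since $x_t\in\{i,j\}$ can a priori be either element, the worse of the two possible bounds is exactly $c_{ij}(t)>\tfrac12\min\{\Delta(k,i)_+,\Delta(k,j)_+\}$, as desired.

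Converting this pointwise lower bound on $c_{ij}(t)$ into the uniform bound on $n_{ij}$ in the lemma is then immediate: whenever the event $\1(i_t^*=k,\{x_t,y_t\}=\{i,j\})$ occurs at time $t$, the chain above forces $n_{ij}(t)<4\alpha\log a_{ij}(t)/\min\{\Delta(k,i)_+,\Delta(k,j)_+\}^2$, and since $n_{ij}$ only increments on such rounds while $a_{ij}(t)$ is non-decreasing, a standard monotonicity argument propagates the bound to any $\tau\geq 1$. The main subtlety I foresee is in the two-step selection: one must check carefully that both $x_t$ and $y_t$, and also $k$, lie in $\cC_t$, so that the chain of inequalities routed through $u_{k,x_t}$ is legitimate; the preliminary observation $|\cC_k(t)|=|S_t|$ is precisely what makes this possible, and it is where the sleeping structure of the problem enters the analysis.
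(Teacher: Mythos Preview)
Your approach is correct and matches the paper's: both show that on the good event, whenever $\{x_t,y_t\}=\{i,j\}$ with $i_t^*=k$, the radius $c_{ij}(t)$ must exceed $\tfrac12\min\{\Delta(k,i)_+,\Delta(k,j)_+\}$, and then invert. Your presentation is in fact a bit cleaner than the paper's four-case split, since the preliminary observation that $k\in\cC_t$ and that every member of $\cC_t$ optimistically dominates all of $S_t$ lets you treat the subcases uniformly.

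Three small slips to clean up. (i)~Because $u_{\ell\ell}(t)=\tfrac12$, one has $\ell\notin\cC_\ell(t)$, so $\cC_k(t)=S_t\setminus\{k\}$ and $|\cC_\ell(t)|=|S_t|-1$ for $\ell\in\cC_t$, not $|S_t|$; your downstream uses are unaffected since you only invoke $u_{\ell,m}(t)>\tfrac12$ with $m\neq\ell$. (ii)~Writing ``WLOG $i<j$'' silently drops the case $i=j\neq k$ (the paper's Case~1); it is dispatched in one line: if $x_t=y_t=i$ then the $\argmax$ rule forces $u_{k,i}(t)\leq u_{i,i}(t)=\tfrac12$, contradicting $u_{k,i}(t)\geq p_{k,i}>\tfrac12$. (iii)~In the last step, $n_{ij}$ increments on \emph{every} round with $\{x_t,y_t\}=\{i,j\}$, not only those with $i_t^*=k$; the clean fix (which the paper uses) is to let $\tau'$ be the last round in $[1,\tau]$ where the joint event holds, observe $\sum_{t\le\tau}\1(\cdot)\le n_{ij}(\tau')$, and apply your pointwise bound at $t=\tau'$ together with $a_{ij}(\tau')\le a_{ij}(\tau)$.
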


 With probability of at least $1-\delta$, the event of Lem.~\ref{lem:conf_cdels} holds and thus so do the ones of Lem.~\ref{lem:rucb_nijs}. The regret of Alg.~\ref{alg:rucbp} then follows from applying the above lemmas with the following careful decomposition of the regret: 
{
\begin{align*}
R_T & \le \sum_{i=1}^{K-1} \sum_{j=i+1}^K \sum_{t=1}^T \sum_{k=1}^i \1(i_t^* = k)  \1\big(\{x_t,y_t\} = \{i,j\}\big) = \sum_{i=1}^{K-1} \sum_{j=i+1}^K n_{ij}(T) 
\end{align*}
and the proof is concluded by using Lemma~\ref{lem:rucb_nijs} to upper-bound $n_{ij}(T)$. The complete proof given in Appendix~\ref{app:rucbp}.
}
\end{proof}



\section{\algrmed: An Expected Regret Algorithm}
\label{sec:algo_komiyama}

In this section, we propose another computationally efficient algorithm, \algrmed\, (Alg.~\ref{alg:rmed}), which achieves near-optimal expected-regret for \SPAA\, problem, and also performs competitively against \algrucbp\, empirically (see Sec.~\ref{sec:expts}). 
Furthermore, a novelty of our finite time regret analysis of \algrmed\, lies in showing a cleaner trade-off between regret vs availability sequence $\cS_T$ which automatically adapts to the inherent `hardness' of the sequence of available subsets $\cS_T$, unlike the previous attempts made in standard sleeping bandits for adversarial availabilities \cite{kleinberg+10} (Rem.~\ref{rem:rmed}).	


\begin{algorithm}[!t]
   \caption{\textbf{\algrmed}}
   \label{alg:rmed}
\begin{algorithmic}[1]	
   \STATE {\bfseries input:} Arm set: $[K]$, exploration parameter $t_0 > 0$, parameter $\alpha > 0$
   \FOR{$t = 1, 2, \ldots, T$}
	\STATE $\hp_{ij}(t) := \frac{w_{ij}(t)}{n_{ij}(t)}, ~\hp(i,i) \leftarrow \nicefrac{1}{2},~\forall i,j \in [K]$ (assume $\frac{x}{0}:=0.5, ~\forall x \in \R)$ ~~
	\STATE Receive $S_t \subseteq [K]$
    \IF {$|S_t|\ge 2$ and $\exists i,j \in S_t$ s.t. $n_{ij}(t) < t_0, ~i \neq j$}
    \STATE Set $x_t \leftarrow i$, $y_t \leftarrow j$ \hfill  $\triangleright$ {\color{black} Exploration rounds}
	\ELSE
	\STATE $\hcB_i(t):= \{j \in [K]\sm\{i\} \mid \hp_{ij}(t) \le \nicefrac{1}{2} \}\cap S_t, ~\forall i \in [K]$ \hfill $\triangleright$ {\color{black} Empirical winners over $i$}
	\STATE $\cI_i(t) := \sum_{j \in \hcB_i(t)}n_{ij}(t)\kl(\hp_{ij}(t) , \nicefrac{1}{2} )$, $\forall i \in [K]$ and $\hi^*_t \leftarrow \arg\min_{i \in S_t}\cI_i(t)$
    \STATE $\cC_t:= \{i \in S_t \mid \cI_i(t) - \cI_{\hi^*_t}(t) \le \alpha \log t\}$ \hfill  $\triangleright$ {\color{black} Potential good arms}
	\STATE Select any $x_t$ from $\cC_t$ uniformly at random
    \STATE \textbf{if} \big(\, $\hi^*_t \in \hcB_{x_t}(t)$ or $\hcB_{x_t}(t) = \emptyset$\big): set $y_t \leftarrow \hi^*_t$, \textbf{else}: $y_t \leftarrow \arg\max_{i \in S_t\sm\{x_t\}} \hp_{i x_t}(t)$
	\ENDIF
	\STATE Play $(x_t,y_t)$ Receive preference feedback $o_t$ 
   \ENDFOR  
\end{algorithmic} 
\end{algorithm} 

\noindent \textbf{Main ideas. }
We again use the notations $w_{ij}(t), n_{ij}(t)$ as used for \algrucbp \,(Alg.~\ref{alg:rucbp}), with the same initializations. 
Same as \algrucbp, this algorithm also maintains the empirical pairwise preferences $\hp_{ij}(t)$ for each item pair $i,j \in [K]$. 
However, unlike the earlier case here we need to ensure an initial $t_0$ rounds of exploration ($t_0 = 1$ in the theorem) for every distinct pairs $(i,j)$, and instead of maintaining pairwise UCBs, in this case the set of `good-items' is defined in terms of \emph{empirical divergences} for all $i \in S_t$ \vspace*{-1pt}
\begin{align*}
  \cI_i(t) & := \sum_{j \in \hcB_i(t)}n_{ij}(t)\kl \big(\hp_{ij}(t) , \nicefrac{1}{2} \big), ~\, 
  \hcB_i(t):= \Big\{j \in [K]\sm\{i\} \mid \hp_{ij}(t) \le \nicefrac{1}{2} \Big\}\cap S_t
\end{align*}
denotes the empirical winners of item $i$ in set $S_t$. 
Now intuitively since $\exp(-\cI_i(t))$ can be interpreted as the likelihood of $i$ being the \emph{best-item} of $S_t$, we denote by $\smash{\hi^*_t \leftarrow \arg\min_{i \in S_t}\cI_i(t)}$  the \emph{empirical-best} item of round $t$ and define the set of `near-best' items 
$
  \smash{\textstyle{  \cC_t:= \big\{i \in S_t \mid \cI_i(t) - \cI_{\hi^*_t}(t) \le \alpha \log t \big\},}}
$
whose likelihood is close enough to that of $\smash{\hi^*_t}$. Finally the algorithm selects an arm pair $(x_t,y_t)$ such that $x_t$ is a potential candidate of good arm (which ensures the required exploration) and $y_t$ being the strongest challenger of $x_t$ w.r.t the empirical preferences. 
The algorithm is given in Alg.~\ref{alg:rmed}.






\begin{restatable}[Expected regret analysis \algrmed]{thm}{ubrmed}
\label{thm:ub_rmed}
Let $t_0=1$ and $\alpha = 4K$. Then as $T \to \infty$, the expected regret incurred by \algrmed ~(Alg.~\ref{alg:rmed}) 
can be upper bounded as: For all $\epsilon_2,\dots,\epsilon_K \geq 0$
{
\begin{align*}
\E \big[R_T\big] 
     & \lesssim K^2  + \hspace*{-.3cm}  \sum_{1\leq i <j \leq K} \hspace*{-.2cm} \bigg( \frac{K \1_{\{\Delta(i,j)>\epsilon_j\}}}{\Delta(i,j)^2 } +   n_{ij}(T) \min\{\epsilon_j, \Delta(i,j)\}
    \bigg) +  \sum_{j = 2}^{K} \frac{K \log T} {\max\big\{ \epsilon_j, \Delta(j-1,j)_+ \big\}} \\
    & \leq O\bigg( \min \bigg\{ \sum_{j = 2}^{K} \frac{K \log T}{\Delta(j-1,j)_+}, \ K T^{2/3} \bigg\}  \bigg).  
\end{align*}
}
\end{restatable}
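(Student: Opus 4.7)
The plan is to bound the expected regret by separately handling (i) the exploration rounds triggered by the condition $n_{ij}(t) < t_0$ on line 5 of Algorithm~\ref{alg:rmed}, and (ii) the exploitation rounds where the empirical-divergence-based selection is active. With $t_0 = 1$ and availability permitting, each of the $\binom{K}{2}$ distinct pairs can be forced at most once in exploration mode, contributing at most $O(K^2)$ to the regret. For the exploitation rounds, I would bound, for each pair $(i,j)$ with $i < j$, the expected number of times $x_t = j$ is selected while the true best available item $i^*_t$ satisfies $i^*_t < j$, and then convert this pair-count into a regret bound using the algorithm's choice of $y_t$.

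First I would establish a concentration statement: with high probability, for all rounds $t$ and all pairs $(i,j)$, $\hp_{ij}(t)$ is close to $p_{ij}$ up to fluctuations of order $\sqrt{\log t / n_{ij}(t)}$, via a union bound over Hoeffding/KL self-normalized inequalities. Conditioned on this event, the key observation is that if a suboptimal $x_t = j$ enters the good-arms set $\cC_t$, then by the defining inequality of $\cC_t$ and minimality of $\hi^*_t$,
\begin{equation*}
\cI_j(t) \leq \cI_{\hi^*_t}(t) + \alpha \log t \leq \cI_{i^*_t}(t) + \alpha \log t.
\end{equation*}
I would then show that $\cI_{i^*_t}(t) = O(K)$ uniformly in $t$ on the high-probability event (any $k \in \hcB_{i^*_t}(t)$ must satisfy $k > i^*_t$ and hence have produced only spuriously unfavorable empirical feedback, whose total KL contribution is controllable per arm). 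On the other hand, once $n_{ij}(t)$ is large enough, $\hp_{ij}(t) < 1/2$, so $i \in \hcB_j(t)$ and therefore $\cI_j(t) \geq n_{ij}(t)\,\kl(\hp_{ij}(t), 1/2) \gtrsim n_{ij}(t)\,\Delta(i,j)^2$ by Pinsker. Rearranging yields $n_{ij}(T) \lesssim K \log T / \Delta(i,j)^2$ whenever $j$ is played suboptimally against $i$ and $\Delta(i,j) > \epsilon_j$.

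To convert these pair-counts into the regret expression, I would invoke the selection rule for $y_t$: the algorithm picks $y_t = \hi^*_t$ whenever consistent and otherwise the empirical strongest challenger of $x_t$, which ensures $\Delta(i^*_t, y_t)$ is dominated by the regret of the worst near-optimal arm and leads to the aggregate sum $\sum_{j=2}^K K \log T / \max\{\epsilon_j, \Delta(j-1,j)_+\}$ indexed by the sorted-neighbor gaps. Pairs with $\Delta(i,j) \leq \epsilon_j$ are allowed to be played freely and their contribution is charged to the saturation term $n_{ij}(T)\min\{\epsilon_j,\Delta(i,j)\}$. For the two corollary rates, choosing $\epsilon_j = 0$ gives the instance-dependent $\sum K \log T / \Delta(j-1,j)_+$ bound directly, while balancing the three dominant terms $K^3/\epsilon^2$ (from $\sum K/\Delta(i,j)^2$), $\epsilon T$ (from $\sum n_{ij}(T)\epsilon$), and $K^2 \log T / \epsilon$ (from the logarithmic sum) at $\epsilon \asymp K T^{-1/3}$ yields the worst-case $O(K T^{2/3})$ bound.

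The main obstacle will be the uniform control of $\cI_{\hi^*_t}(t)$ under adversarial availabilities. In the classical RMED analysis of~\cite{Komiyama+15}, the overall best item has $\cI_{i^*}(t) \to 0$ by a clean concentration argument, but here the benchmark $i^*_t$ changes with $S_t$ and need not coincide with the global optimum; careful bookkeeping is required to show that the total divergence contributed by transient empirical reversals to $\cI_{i^*_t}(t)$ stays $O(K)$ uniformly in $t$, likely through a union bound over arms combined with an anytime KL concentration argument. A secondary difficulty is the $y_t$ accounting: because $y_t$ can be either $\hi^*_t$ or an empirical challenger, the per-round regret $(\Delta(i^*_t,x_t)+\Delta(i^*_t,y_t))/2$ must be bounded in both regimes without double-counting pair pulls, which is what ultimately drives the particular sorted-neighbor gap $\Delta(j-1,j)_+$ appearing in the stated bound rather than the minimum gap.
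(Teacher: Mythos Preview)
Your high-level decomposition (exploration rounds versus exploitation rounds, control via $\cI_j(t)\le\cI_{\hi^*_t}(t)+\alpha\log t$) matches the paper, but two steps in your plan are genuine gaps.

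\textbf{The neighbor-gap telescoping.} From your argument you obtain, for each pair $i<j$ with $\Delta(i,j)>\epsilon_j$, a bound $n_{ij}(T)\lesssim K\log T/\Delta(i,j)^2$. Summing $n_{ij}(T)\,\Delta(i,j)$ over all such pairs then gives $\sum_{i<j}K\log T/\Delta(i,j)$, which in the equal-gap case $\Delta(i,j)\equiv\Delta$ is $\Theta(K^3\log T/\Delta)$, a full factor $K$ above the claimed $\sum_j K\log T/\Delta(j-1,j)_+=\Theta(K^2\log T/\Delta)$. The paper closes this gap not by bounding each $n_{ij}$ but by bounding the \emph{cumulative} count $N_{ij}(T)=\sum_{k\le i}n_{kj}(T)$. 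Under $\cG(t)$ and $x_t=j$, all $k\le i$ with $k\in S_t$ lie in $\hcB_j(t)$, and Jensen applied to the convex map $p\mapsto\kl^+(p,1/2)$ collapses $\sum_{k\le i}n_{kj}(t)\,\kl^+(\hp_{jk}(t),1/2)$ into a single term involving $N_{ij}(t)$, giving $N_{ij}(T)\lesssim K\log T/\Delta(i,j)^2$. An Abel summation then rewrites $\sum_i n_{ij}\,\Delta(i,j)=\sum_i N_{ij}\big(\Delta(i,j)-\Delta(i+1,j)\big)$, and the integral estimate $\sum_{i<j}\big(\Delta(i,j)-\Delta(i+1,j)\big)/\Delta(i,j)^2\lesssim\sum_j 1/\max\{\epsilon_j,\Delta(j-1,j)\}$ finishes it. Your sentence ``leads to the aggregate sum $\sum_{j}K\log T/\max\{\epsilon_j,\Delta(j-1,j)_+\}$'' hides exactly this machinery, which is the technical heart of the bound.

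\textbf{Controlling $\cI_{i_t^*}(t)$.} Your claim that $\cI_{i_t^*}(t)=O(K)$ uniformly in $t$ on a high-probability event is too strong: each spurious term $n_{i_t^* k}(t)\,\kl(\hp_{i_t^* k}(t),1/2)$ with $p_{i_t^* k}>1/2$ is only $O(\log t)$ under standard anytime KL concentration, giving $O(K\log t)$, not $O(K)$. More to the point, the paper does not try to bound $\cI_{i_t^*}(t)$ pathwise at all. It instead defines $\cG(t)=\{\forall j>i_t^*:\Delta(i_t^*,j)>\epsilon_j\Rightarrow\hp_{i_t^* j}(t)>1/2\}$, under which $\cI_{i_t^*}(t)=0$ directly (for $\epsilon=0$), and bounds the \emph{expected number of rounds} where $\cG(t)$ fails. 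The argument exploits the uniform sampling of $x_t$ from $\cC_t$: since $i_t^*\in\cC_t$ with probability $1-O(K/t^2)$ (this is where $\alpha=4K$ is used), with probability $\ge 1/K$ we have $x_t=i_t^*$, and then the algorithm's rule for $y_t$ forces a duel with precisely the offending $j$, incrementing $n_{i_t^* j}$. This self-correcting coupling bounds $\E[\sum_t\1(\cG^c(t))]$ by $O(K\sum_{i<j}\Delta(i,j)^{-2})$, which is the source of the $K/\Delta(i,j)^2$ term in the statement. Your ``union bound over arms combined with an anytime KL concentration'' would not produce this term and would likely inflate the constant-order part.
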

The proof is deferred to Appendix~\ref{app:rmed}. Although it borrows some high-level ideas from~\cite{kleinberg+10} for sleeping bandits and from \cite{Komiyama+15} for RMED in standard dueling bandits, our analysis needed new ingredients in order to obtain $O(K^2 (\log T)/\Delta)$. This is especially the case for the proofs of the technical Lemmas~\ref{lem:goodevent} and~\ref{lem:ni} which significantly differ from ``corresponding'' technical lemmas of \cite{Komiyama+15}. Specifically, both regret bounds of RMED and ours need to control the length of an initial exploration $t_0$ after which pairwise preferences are well estimated by $\hat p_{ij}(t)$. This is done respectively by our Lemma~\ref{lem:goodevent} and Lemma~5 of \cite{Komiyama+15}. Yet, RMED's original analysis is based on a union bound over all possible subsets $S \subset \{1,\dots,K\}$ of items (see Equation~(19) in~\cite{Komiyama+15}), whose number is exponential in $K$.  Despite our efforts, we could not follow the proof of Lemma~5 of \cite{Komiyama+15}, which to the best of our understanding, should yield to an exploration $t_0$ exponentially large in $K$ contrary to $O(1)$ claimed in \cite{Komiyama+15}. Instead, in our proof of Lem.~\ref{lem:goodevent}, we carefully apply concentration inequalities to run union bounds over the items directly instead of sets of items. 


%
The upper-bound of Thm.~\ref{thm:ub_rmed} is close to optimal. It suffers at most a suboptimal factor $K$ and exactly matches the lower-bound for some problems. The distribution-free upper-bound of order $O(T^{2/3})$ matches obtainable standard dueling bandit problems \cite{Komiyama+15,Zoghi+14RUCB}, since the later algorithms also suffer constant terms of order $\Delta^{-2}$. Yet, it is unclear whether it is optimal or if $\smash{O(\sqrt{T})}$ can be obtained. 

\begin{rem}[Sequence $\cS_T$ adaptivity of Alg.~\ref{alg:rmed}]
\label{rem:rmed}
\emph{It is worth pointing out that the regret bound of Thm.~\ref{thm:ub_rmed} is finite time and automatically adapts to the sequence of available sets $\cS_T$. In the worst-case, the complexity lies in identifying for all items $j$ the gap with the earlier item $j-1$. Yet, our regret-bound, which holds for any $\epsilon_j \geq 0$, will automatically perform a trade-off for each $j$ between the gap $\smash{\Delta(j-1,j)_+^{-1}}$ and $\smash{\epsilon_j \sum_{i=1}^{j-1} n_{ij}(T)}$ the number of times $j$ is played together with a better item $i<j$. In particular, $\smash{\sum_{i=1}^{j-1} n_{ij}(T)}$ can be small if $j$ is rarely available in $S_t$ while not optimal. Notably, this adaptivity to $\cS_T$ item per item improves the regret guarantee of Thm. $(10)$, \cite{kleinberg+10}, which also addresses the problem of sleeping bandits with `adversarial availabilities' but for the stochastic multi-armed bandit setup and only provides worst-case guarantees over all $\cS_T$ and a trade-off $\epsilon$ independent of $j$.}
\end{rem}

\begin{rem}[\algrmed\, in standard dueling bandits]
\emph{Even in the dueling bandit setting (without the sleeping component), \algrmed~ and Thm.~\ref{thm:ub_rmed} have advantages compared to the RMED algorithm and analysis of \cite{Komiyama+15}. Our regret bound is valid for all number of items $K$, while the one of Thm.~3 of \cite{Komiyama+15} is only asymptotic when $K \to \infty$. 
This is due to the fact that the algorithm of \cite{Komiyama+15} depends on a hyper-parameter $f(K)$ which needs to larger than $A K$, where $A$ is a constant in $K$ and $T$ but which depends on the unknown sub-optimality gaps $\Delta(i,j)$. Thus, \cite{Komiyama+15} chooses $f(K) \approx K^{1+\epsilon}$ so that eventually the bound is satisfied when $K \to \infty$. Instead, our algorithm only depends on easily tunable hyper-parameters $t_0$ and $\alpha$, whose optimal values are independent of unknown parameters. }
\end{rem}



\section{Experiments}
\label{sec:expts}

In this section, we compare the empirical performances of our two proposed algorithms (Alg.~\ref{alg:rucbp} and~\ref{alg:rmed}). Note that there are no other existing algorithms for our problem (see Sec.~\ref{sec:intro}). 

\textbf{Constructing Preference Matrices ($\P$).} 
We use the following three different utility based  Plackett-Luce$(\btheta)$ preference models (see Sec.~\ref{sec:prob}) that ensures a \emph{total-ordering}. We now construct three types of problem instances $1.$ {\it Easy} $2.$ {\it Medium} $3.$ {\it Hard}, for any given $K$, such that items with their respective $\btheta$ parameters are assigned as follows:
$1.$ {\it Easy:} $\btheta(1:\lfloor K /2 \rfloor) = 1$, $\btheta(\lfloor K/2 \rfloor + 1:K) = 0.5$.
$2.$ {\it Medium:} $\btheta(1:\lfloor K/3 \rfloor) = 1$, $\btheta(\lfloor K/3 \rfloor+1:\lfloor 2K/3 \rfloor) = 0.7$, $\btheta(\lfloor 2K/3 \rfloor + 1:K) = 0.4$.
$3.$ {\it Hard:} $\btheta(i) = 1 - (i-1)/K,\, \forall i \in [K]$. Note for each $\bsigma^* = (1 > 2> \ldots K)$. 
\vspace{-3pt}

In every experiment, we set the learning parameters $\alpha = 0.51$, $\delta = 1/T$ for \algrucbp \,(Alg.~\ref{alg:rucbp}) and as per Thm.~\ref{thm:ub_rmed} for \algrmed\, (Alg.~\ref{alg:rmed}). All results are averaged over $50$ runs.

\textbf{Regret over Varying Preference
Matrices.}
We first plot the cumulative regret of our two algorithms (Alg.~\ref{alg:rucbp} and~\ref{alg:rmed}) over time on the above three Plackett-Luce datasets for K = $10$. 
We generate availability sequence $\cS_T$ randomly by sampling every item $i \in [K]$ independently with probability $0.5$. 
Fig~\ref{fig:vs_t} shows that, as their names suggest too, \textit{instance-Easy} is easiest to learn as the best-vs-worst item preferences are well separated and the diversity of the item preferences across different groups are least. Consequently the algorithms yield slightly more regret on \emph{instance-Medium} due to higher preference diversity, and the hardest instance being \emph{Hard} where the learner really needs to differentiate the ranking of every item for any arbitrary set sequences $\cS_T$. 
Empirically \algrucbp\, is seen to slightly outperform \algrmed, though orderwise they perform competitively.

\textbf{Regret over Varying Set Availabilities.}
\vspace*{-1pt}
In these set of experiments, 
the idea is to understand how the regret improves over completely random subset availabilities as now the learner may not have to distinguish all item preferences as some of the item combinations occurs rarely.
We choose $K = 10$ and to enforce item dependencies we generate each set $S_t$ by drawing a random sample from Gaussian$(\bmu, \Sigma)$ such that $\bmu_i = 0,\, \forall i \in [10]$, and $\Sigma$ is a fixed $10 \times 10$ positive definite matrix which controls the set dependencies: Precisely we use two different block diagonal matrices for \emph{Low-Correlation} and \emph{High-Correlation} with the following correlations:
$1.$ \emph{Low-Correlation: } $\Sigma$ is a separated block diagonal matrix on item partitions $\{1,2,3\}$, $\{4,5,6\}$, $\{7,8,9,10\}$.
$2.$ \emph{High-Correlation: } $\Sigma$ is constructed by merging three all-$1$ matrices on partitions $\{1 \ldots 5\}$, $\{2, \ldots 8\}$, and $\{6, \ldots 10\}$, however as the resulting matrix is positive semi-definite, so we further take its SVD and reconstruct the matrix back eliminating the negative eigenvalues.
At every round we sample a random vector from Gaussian$(\bmu, \Sigma)$, and $S_t$ is considered to be the set of items whose value exceeds $0.5$.
Both experiments are run on \emph{instance-Hard}.
Fig.~\ref{fig:vs_S} shows, as expected, on \emph{Low-Correlation} both algorithms converge to $\bsigma^*$ relatively faster and at lower regret compared to \emph{High-Correlation} (as the later induces higher variability of the available subsets).

\vspace*{-10pt}
\begin{figure}[H]
	\begin{center} 
	\includegraphics[trim={3cm 0cm 4.5cm 0cm},clip,scale=0.2,width=0.25\textwidth]{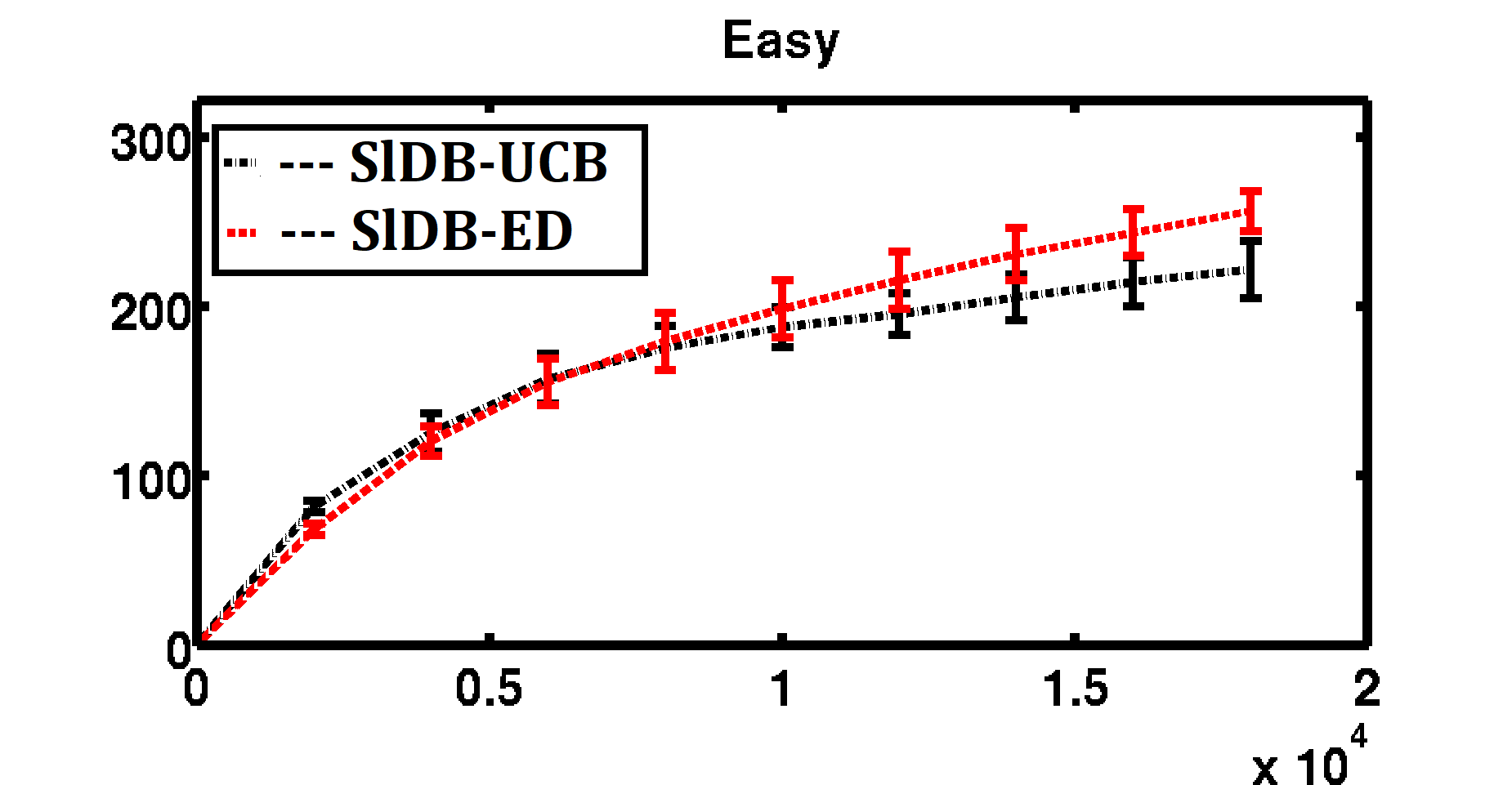}
		\includegraphics[trim={3cm 0cm 4.5cm 0cm},clip,scale=0.2,width=0.25\textwidth]{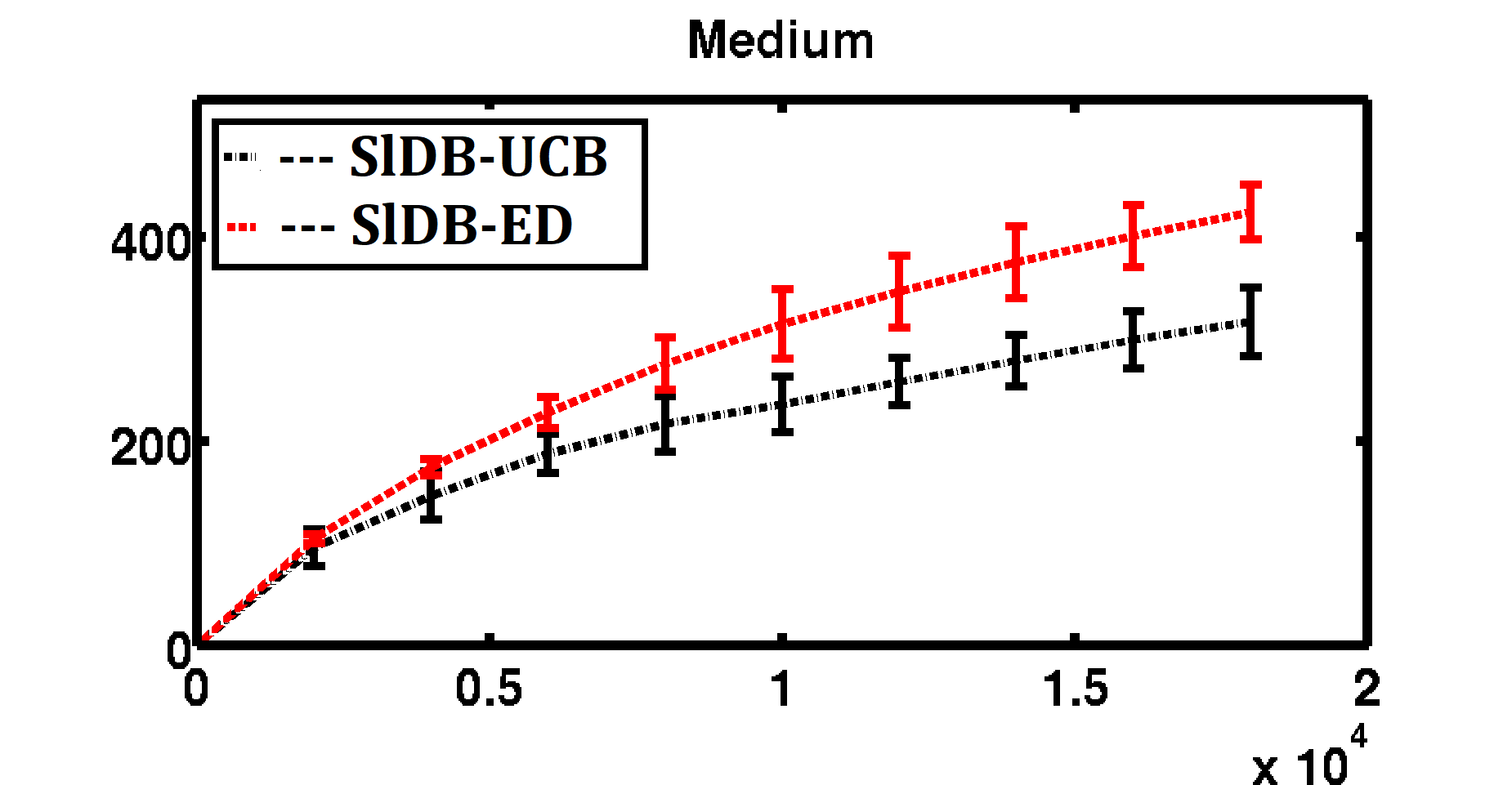}
		\includegraphics[trim={3cm 0cm 4.5cm 0cm},clip,scale=0.2,width=0.25\textwidth]{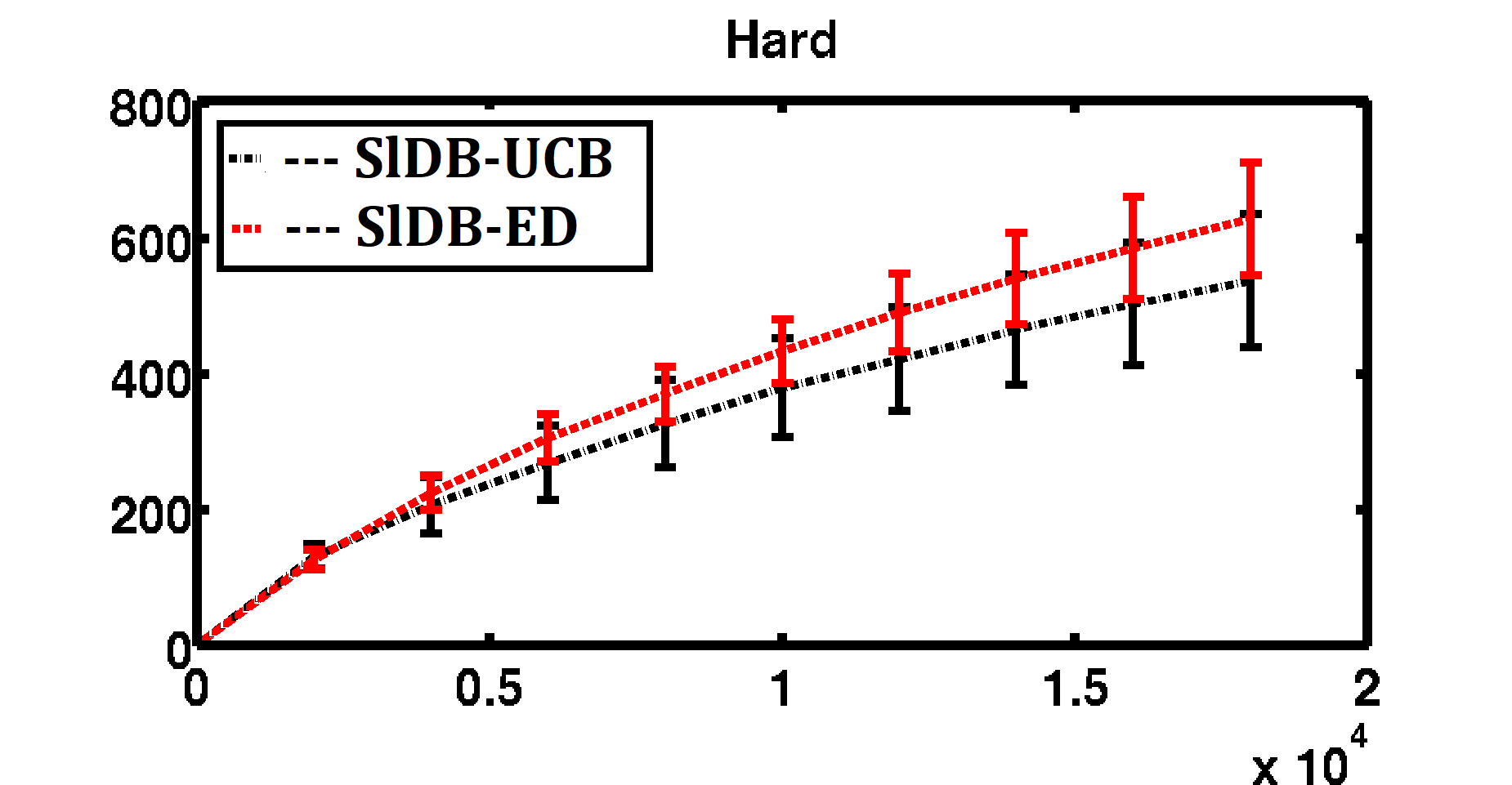}
		\vspace{-8pt}
		\caption{Regret ($R_t$) vs time ($t$) over three preference instances ($\P$)} \vspace*{-5pt}
		\label{fig:vs_t}
	\end{center}
\end{figure}
\vspace{-20pt}

\begin{minipage}{0.48\textwidth}
\vspace{-10pt}
 \begin{figure}[H]
	\begin{center}
	\includegraphics[trim={3cm 0cm 4.5cm 0cm},clip,scale=0.1,width=0.47\textwidth]{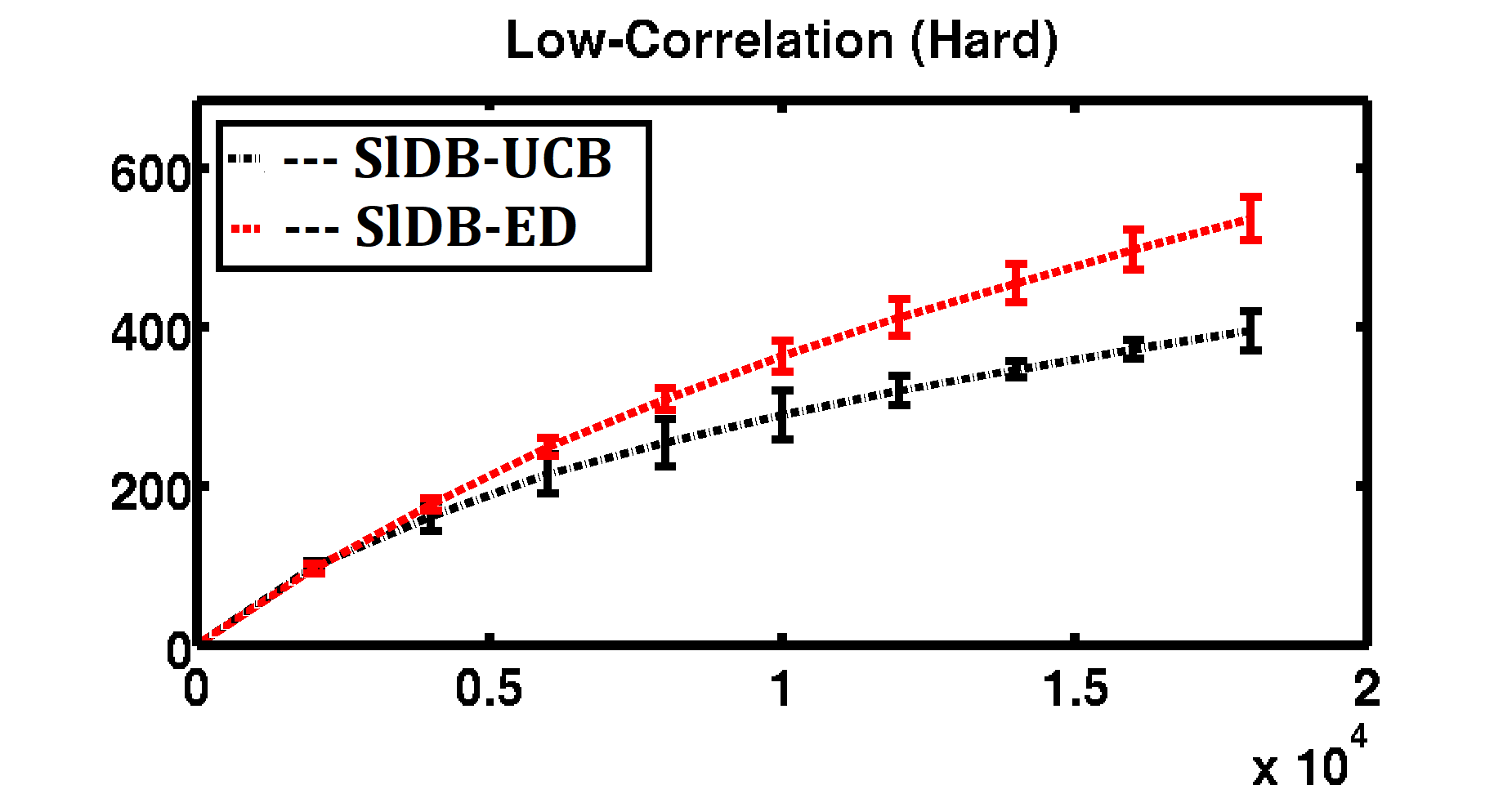}
	\vspace{3pt} 
		\includegraphics[trim={3cm 0cm 4.5cm 0cm},clip,scale=0.1,width=0.47\textwidth]{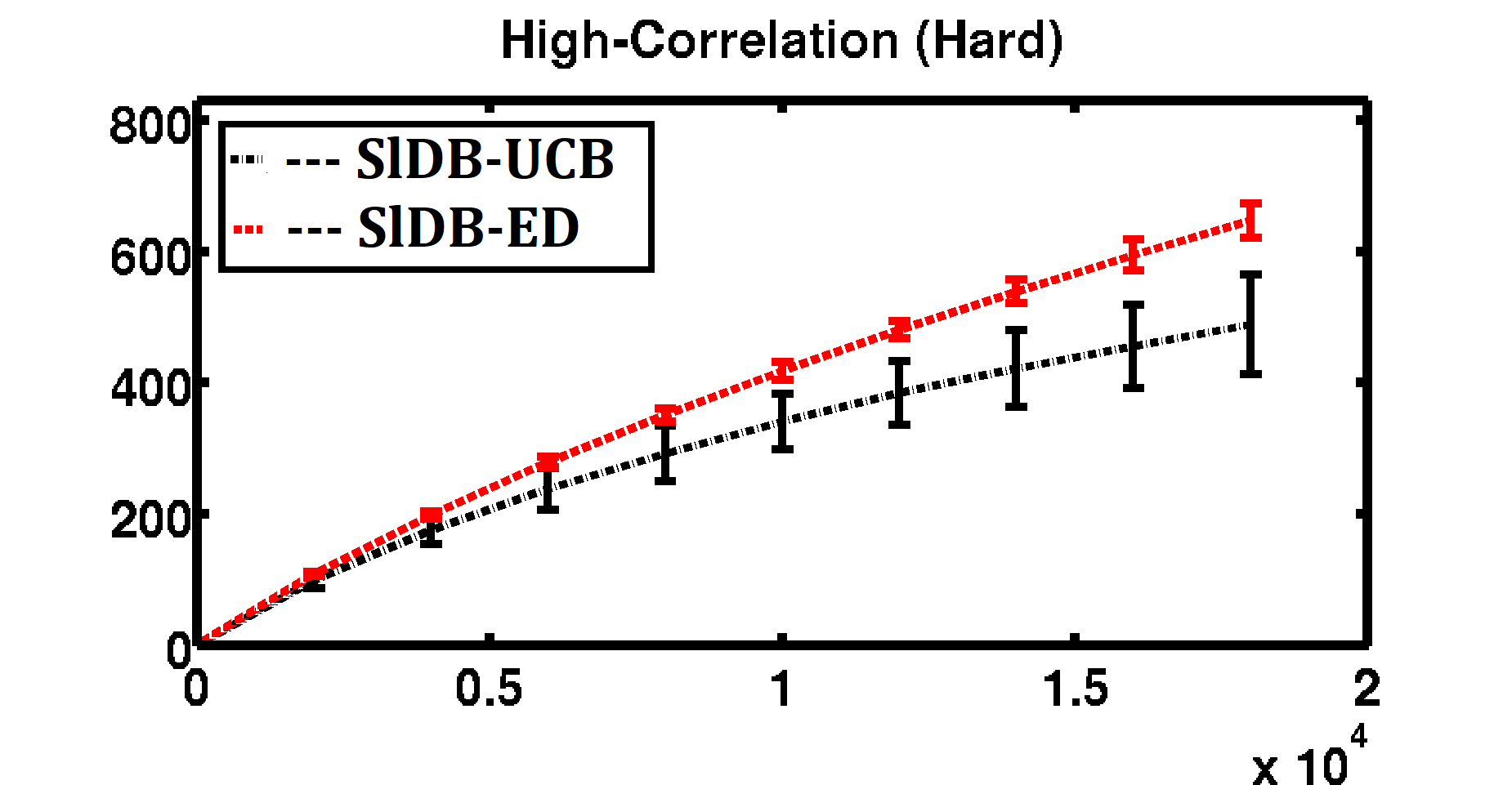} 
		\vspace*{-15pt}
		\caption{Regret ($R_t$) vs time ($t$) over availability sequences $\cS_T$}
		\label{fig:vs_S}
	\end{center}
\end{figure}
\vspace{-10pt}
\end{minipage}\quad
\begin{minipage}{0.48\textwidth}
\vspace{-10pt}
 \begin{figure}[H]
	\begin{center}
	\includegraphics[trim={3.9cm 0cm 4.5cm 0cm},clip,scale=0.1,width=0.47\textwidth]{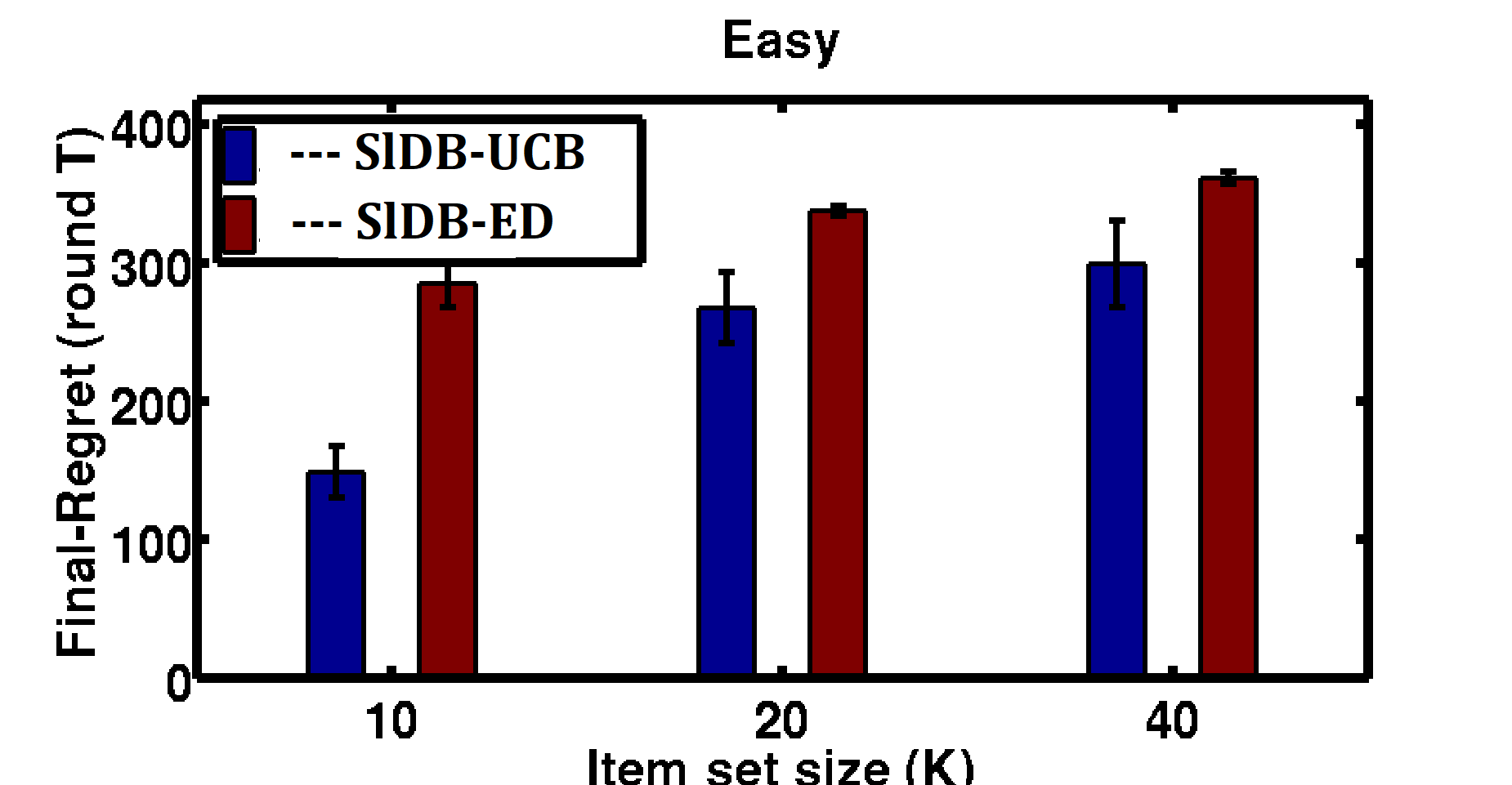}
		\includegraphics[trim={2.9cm 0cm 4.5cm 0cm},clip,scale=0.1,width=0.47\textwidth]{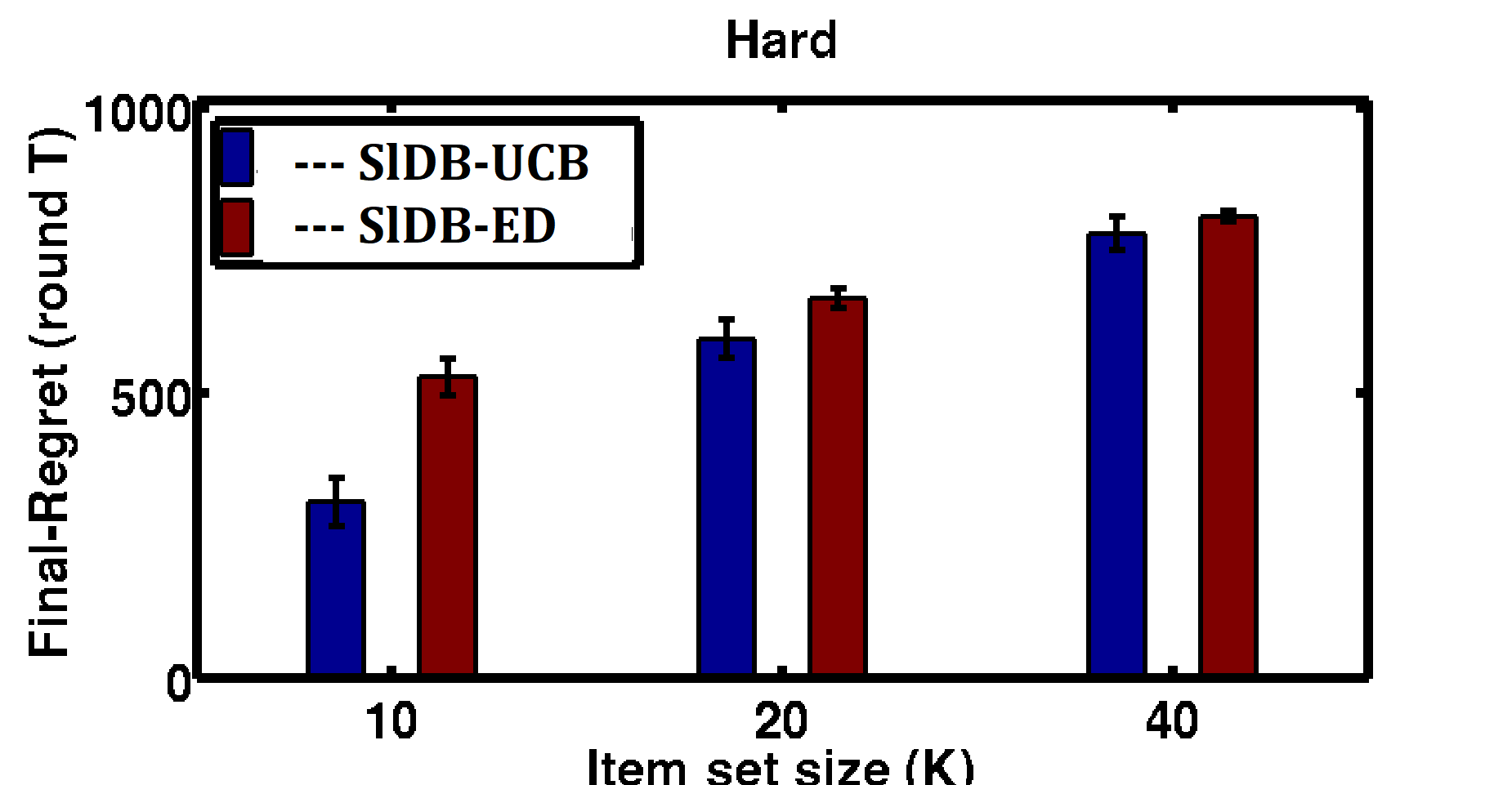} 
		\vspace*{-10pt}
		\caption{Final regret ($R_T$) at $T=10^4$ with varying sizes $(K)$ }
		\label{fig:vs_K}
	\end{center}
\end{figure}		
\vspace*{-10pt}
\end{minipage}

\vspace{-5pt}
\textbf{Final Regret vs Setsize(K).}
We also compared the (averaged) final regret of the two algorithms over varying item sizes $K$. We additionally constructed two larger Plackett-Luce $(\btheta)$ \emph{Easy} and \emph{Hard} instances for  $K=20$ and $40$, using similar $\btheta$ assignments explained before. We set $T = 10\,000$ and use itemwise independent set generation idea, as described for Fig.~\ref{fig:vs_t}. As expected, Fig.~\ref{fig:vs_K} shows the regret of both algorithms scales up with increasing $K$ with effect on \algrmed\, being slightly worse than \algrucbp, though the later generally exhibits a higher variance.

\textbf{Worst Case Regret vs Time.}
We run an experiment to analyse the regret of our two algorithms on the worst case problem instances. Towards this we use preference matrices $\P_{\Delta}$ of the form: $\P_{\Delta}(i,j) = 0.5 + \Delta, ~\forall 1 \le i < j \le K$, i.e. all items are spaced with equidistant gap $\Delta \in (0,0.5]$. As before, we choose $T = 20,000$ and $K = 10$, and run the algorithms on above problem instances varying $\Delta$ in the range $[10/T, \ldots, 0.5]$ with uniform grid-size of $0.005$ (i.e. total $100$ values of $\Delta$, each corresponds to a separate problem instance $\P_{\Delta}$ with different `gap-complexity'). At the end we plot the worst case regret of both the algorithms over time, by plotting $\max_{\Delta}R_t(P_{\Delta})$ vs $t$. We run the experiments over three availability sequences: 1. Independent (as used in Fig.~\ref{fig:vs_t}), 2. Low-Correlation, and 3. High-Correlation (as used in Fig.~\ref{fig:vs_S}). As a consequence the resulting plots reflect the worst case (w.r.t. $\Delta$) performances of the algorithms, which seem to be scaling as $O(T^{2/3})$ for \algrmed, as conjectured to be its distribution free upper bound (see discussion after Thm.~\ref{thm:ub_rmed}), and with a slightly lower rate for \algrucbp. Fig.~\ref{fig:vs_wc} shows the comparative performances.
\vspace{-5pt}
\begin{figure*}[h]
	\begin{center}\includegraphics[trim={3cm 0cm 4.5cm 0cm},clip,scale=0.245,width=0.325\textwidth]{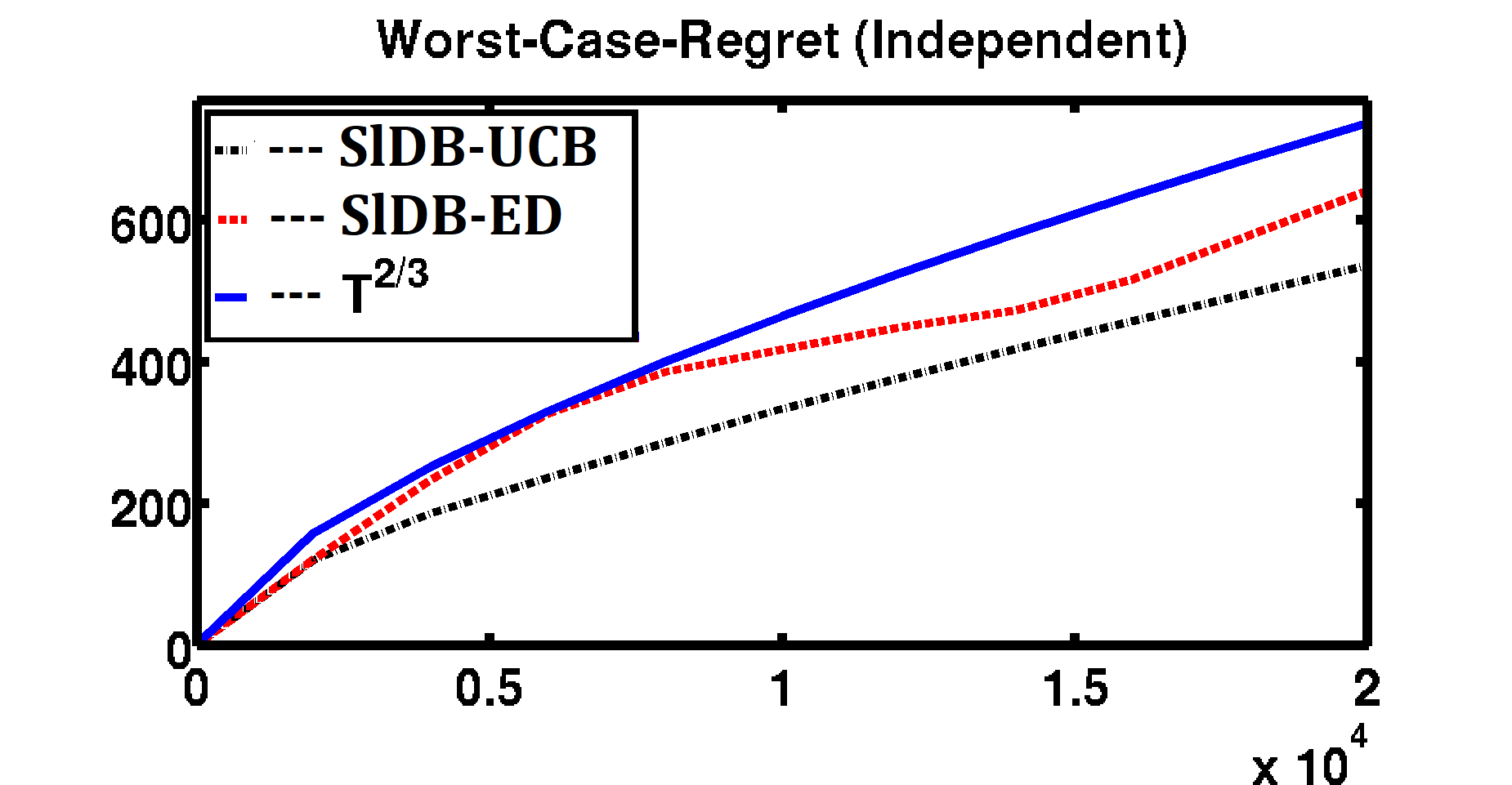}
	\vspace{3pt} 
		\includegraphics[trim={3cm 0cm 4.5cm 0cm},clip,scale=0.245,width=0.325\textwidth]{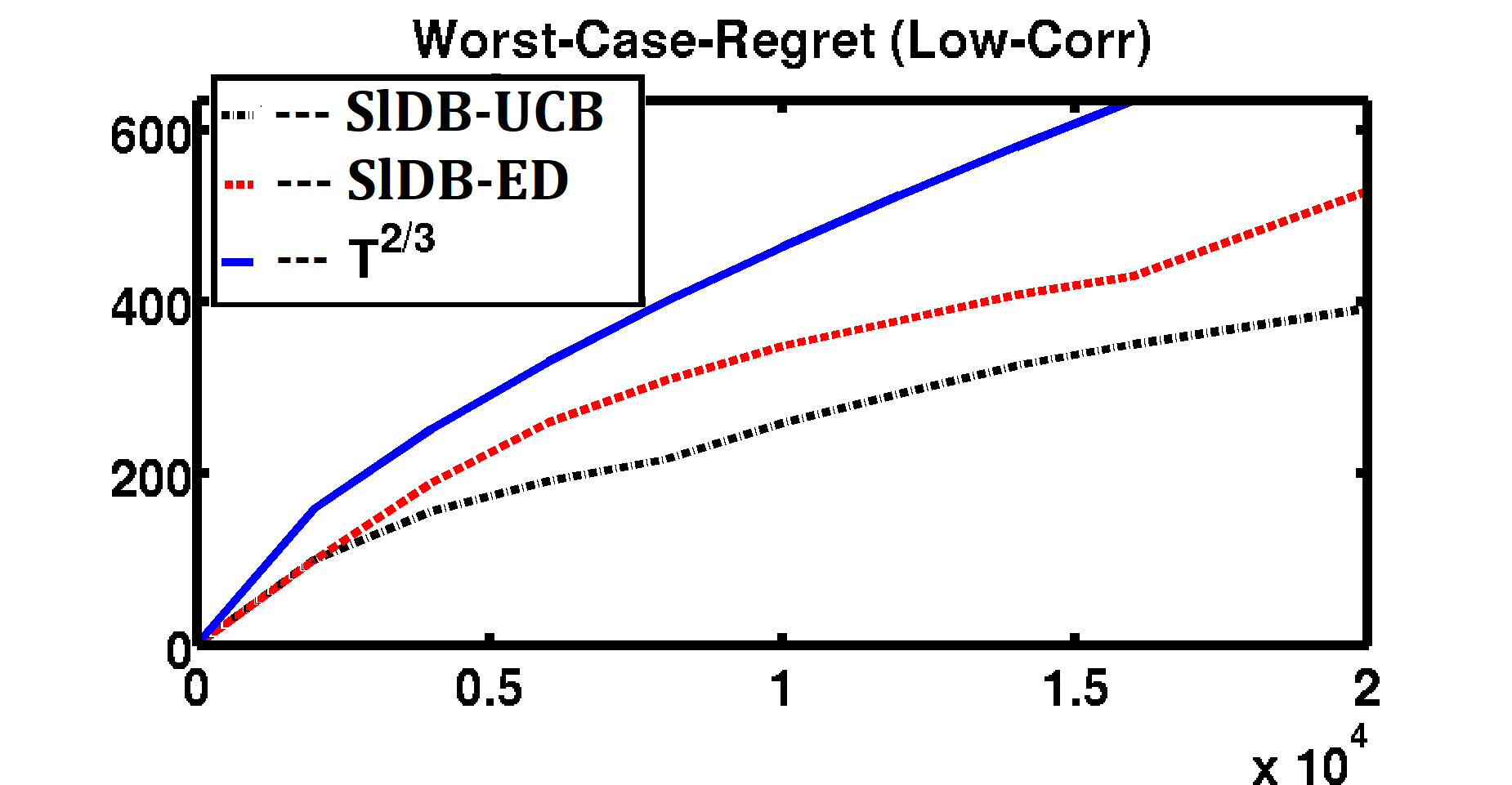} 
		\includegraphics[trim={3cm 0cm 4.5cm 0cm},clip,scale=0.245,width=0.325\textwidth]{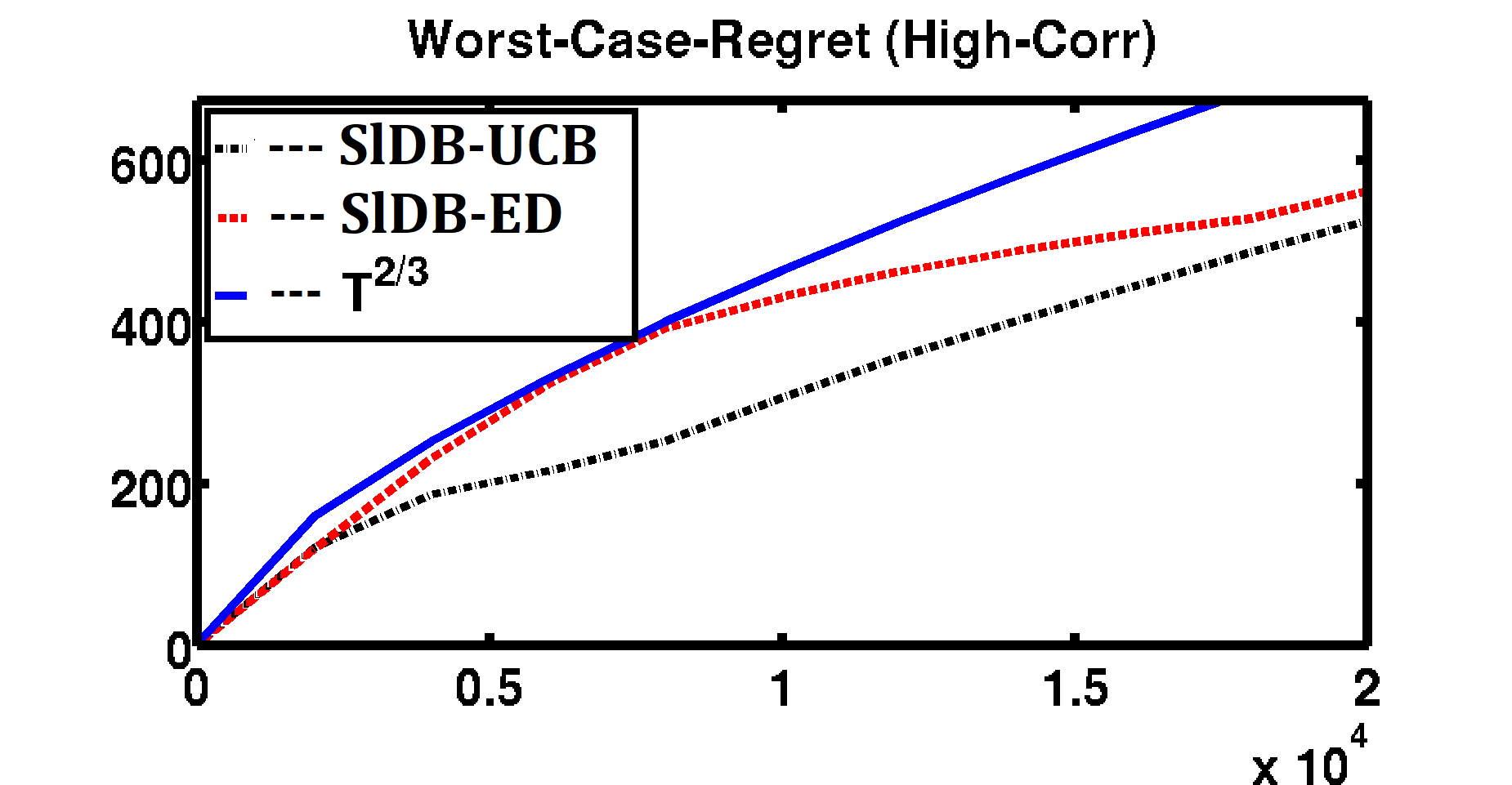}
		\vspace*{-10pt}
		\caption{``Worst Case Regret" ($\max_{\Delta}R_t(P_{\Delta})$) vs time ($t$) over three availability sequences $\cS_T$}
		\label{fig:vs_wc}
	\end{center}
\end{figure*}
\vspace{-5pt}


\section{Conclusion and Perspective}
\label{sec:concl}
We introduce the problem of sleeping dueling bandits with stochastic preferences and adversarial availabilities, which, despite of great practical relevance, was left unaddressed till date. 
Towards this we adapt two dueling bandit algorithms for the problem and give regret analysis for both. We also derive an instance dependent regret lower bound for our problem setup which shows that our second algorithm is asymptotically near-optimal (up to the problem dependent constants). Finally, we compare both our algorithms empirically where usually the first algorithm is shown to outperform the second, although having a relatively weaker regret.

\textbf{Future Works.} Moving forward, one can address many open questions along this direction, including relaxing the \emph{total-ordering} assumption on the stochastic preferences assuming more general ranking objective based on {borda} \cite{SAVAGE} or {copeland} scores \cite{Zoghi+15}, or extending the framework to a general contextual scenario with subsetwise feedback. Another direction worth understanding is to analyze the connection of this problem with other bandit setups, e.g., learning with feedback graphs \cite{Alon+15,Alon+17} or other side information \cite{SideInfo11,SideInfo14}. It would also be interesting to consider the dueling bandit problem for adversarial preference and stochastic availabilities \cite{neu14,kanade09}, and also analyzing these class of problems for general subsetwise preferences \cite{Ren+18,sui2018advancements,Brost+16}.

%
%

\newpage

\bibliographystyle{plain}
\bibliography{bib_sleepingDB}

\newpage
\onecolumn{
\section*{\centering\LARGE{Supplementary: \papertitle}}
\vspace*{1cm}
	
\allowdisplaybreaks

\section{Appendix for Sec.~\ref{sec:lb}}
\label{app:lb}

\begin{defn}[\nr\, algorithm]
\label{def:con}
An algorithm $\cA$ for \dspaa\, problem is defined to be a \nr\, algorithm, if for each problem instance \SPAA$(\P,\cS_T)$ model, the expected number of times $\cA$ plays any suboptimal duel $(i,j) \in [K]\times[K]$ is sublinear in $T$, or more precisely, $\forall (i,j) \neq (i_t^*,i_t^*)$, $\E[n_{ij}(T)] = o(T^\alpha)$, for some $\alpha \in (0,1)$, where recall that we define $n_{ij}(t):= \sum_{\tau = 1}^{t}\1\big(\{x_t,y_t\} = \{i,j\}\big)$ denotes the number of times the pair $(i,j)$ is played by $\cA$ in $T$ rounds. ($\E[\cdot]$ denotes expectation under the randomization of $\cA$ and the \SPAA$(\P,\cS_T)$ model.)
\end{defn}

\subsection{Proof of Thm.~\ref{thm:lb}}

\begin{proof}
The main argument lies behind the fact that in the worst case the adversary can force the algorithm to learn the preference of every distinct pair $(i,j)$ as the in the `worst-case' sequence $\cS_T$ knowledge of the already `learnt' pairwise preferences would not disclose any information on the remaining pairs; e.g. assuming $\bsigma^* = (1,2,\ldots, K)$, revealing the available subsets in the following sequence $(1,2), (1,3), \ldots (1,K), (2,3), (2,4), \ldots (K-1,K)$ would force the learner to explore (learn the preferences) all $K \choose 2$ distinct pairs. 

The remaining proof establishes this formally, towards which we first show a $\Omega(\frac{\ln T}{\Delta(1,2)})$ regret lower bound for a \SPAA \, instance with just two items (i.e. $K=2$) as shown in Lem.~\ref{lem:lb}. The lower bound for any general $K$ can now be derived applying the above bound on independent $K \choose 2$ subintervals, with the availability sequence $(1,2), (1,3), \ldots (1,K), (2,3), (2,4), \ldots (K-1,K)$. 

For the interest of the problem instance construction to prove the lower bound, we would assume $\Delta(i,i+1)>0, ~\forall i \in [K-1]$ and thus we use $\Delta(i,i+1)_+ = \Delta(i,i+1)$ for the rest of this proof (as also assumed for Lem.~\ref{lem:lb}). Note that this is without loss of generality since otherwise the regret lower bound in Lem.~\ref{lem:lb} is trivially $0$.

We add the details below for completeness.

Let $K' = $ $K \choose 2$ and suppose we divide the time horizon into sub-intervals $1,2,\ldots K'$ each of length $T':= T/K'$, where the available subsets are fixed inside every subinterval, and follows the sequence $(1,2), (1,3), \ldots (1,K), (2,3), (2,4), \ldots (K-1,K)$ across subintervals. 
Note that with above construction, the regret minimization problem within each sub-interval boils down to the standard stochastic dueling bandit problem over $2$ arms. 
Further since the preferences of available set $S_t$s are independent across different sub-intervals, applying the lower bound of Lem.~\ref{lem:lb} individually to every $K'$ subintervals the total cumulative regret of $\cA$ in $T$ rounds can be lower bounded as:
\begin{align*}
\E[R_T(\cA)] = \sum_{i = 1}^{K-1}\sum_{j = i+1}^K\E[R_{T'}(\cA)] \ge  \Omega\bigg(\sum_{i = 1}^{K-1}\sum_{j = i+1}^K \frac{\log T'}{\Delta(i, j)}\bigg) = \Omega\bigg(\sum_{i = 1}^{K-1}\sum_{j = i+1}^K \frac{\log T}{\Delta(i, j)}\bigg)
\end{align*}

where the last inequality holds since $T \ge (K')^2$, which implies $\log \frac{T}{K'} \ge \log T - \log \sqrt T = \nicefrac{1}{2}\log T$, and this concludes the proof.

\end{proof}

\lemlb*

\begin{proof}
Note that for $K=2$, the only non-trivial available set is $\{1,2\}$, therefore we assume $S_t = \{1,2\}, ~\forall t \in [T]$.
The proof now simply follows by applying the existing lower bound (Thm. $2$) of \cite{Yue+12} for standard stochastic dueling bandit problem for only $2$ arms.
\end{proof}



\section{Appendix for Sec.~\ref{sec:algo_rucbp}}
\label{app:rucbp}

\noindent \textbf{Notations.}
Let us start with defining useful notation for the analysis. We write for any pair $1\leq i <j \leq K$
\[
   M_{ij} = \sum_{k=1}^i \frac{4\alpha}{\min\big\{\Delta(k,i)_+ ,\Delta(k,j)_+ \big\}^2} \,.
\]
We also denote $S_{\sm i} = S \sm \{i\}, \, i \in S$, for any $S \subseteq [K]$.

\subsection{Complete proof of Thm.~\ref{thm:ub_rucb}}


\begin{customthm}{\ref{thm:ub_rucb}}
Given any $\delta > 0$ and $\alpha\ge1$, with probability at least $1-\delta$, the regret incurred by \algrucbp ~(Alg.~\ref{alg:rucbp})  
is upper-bounded as: 
\vspace*{-3pt}
\begin{align*}
R_T  & \leq 2 \sum_{i=1}^{K-1} \sum_{j=i+1}^K M_{ij} \log \big(2 C(K,\delta) M_{ij}\big)  
\end{align*}
where  $C(K,\delta) := \big({(4\alpha-1) K^2}/((2\alpha-1)\delta)\big)^{\frac{1}{2\alpha-1}}$.
\end{customthm}


\begin{proof}[Proof of Thm.~\ref{thm:ub_rucb}]
The key steps lie in proving the following four lemmas.
The first lemma follows along the line of Lem.~1 of RUCB algorithm~\cite{Zoghi+14RUCB}. It shows after $C(K,\delta)$ rounds all the pairwise estimates are contained within their respective confidence intervals: 

\begin{customlemma}{\ref{lem:conf_cdels}}
Let $\alpha >0.5$ and $\delta >0$. Then, with probability at least $1-\delta$, for any $i,j \in [K]$%
\[
	     \hp_{ij}(t)-c_{ij}(t) \leq p_{ij} \leq  u_{ij}(t) := \hp_{ij}(t)+c_{ij}(t),  \qquad \forall t \in [T] \,.
\]
\end{customlemma}

The lemma below is adapted from Proposition~2, ~\cite{Zoghi+14RUCB}. It basically states that once the algorithm has explored enough (i.e., more than $C(K,\delta)$) the algorithm will not play a suboptimal pair too many times. 

\begin{customlemma}{\ref{lem:rucb_nijs}}
	Let $\alpha > 0.5$. Under the notations and the high-probability event of Lem.~\ref{lem:conf_cdels}, for all $i,j,k \in [K]$ such that $\{i,j\} \neq \{k,k\}$, and for any $\tau \geq 1$
	\begin{equation*}
		\sum_{t = 1}^{\tau} \1(i_t^* = k)\1\big(\{x_t,y_t\} = \{i,j\}\big) 
		\le \frac{4\alpha \log a_{i,j}(\tau)}{\min\big\{\Delta(k,i)_+ ,\Delta(k,j)_+ \big\}^2} \,,
	\end{equation*}
where recall $a_{ij}(\tau) = \max\big( C(K,\delta),n_{ij}(\tau)\big)$.
\end{customlemma}

Given the above results, we are  ready to analyze the regret guarantee of \algrucbp. 
For ease on notation we denote $\cX_t = \{x_t,y_t\}$.
Let us assume the `good event' of Lem.~\ref{lem:conf_cdels} holds good for all $t\in[T]$, which is true with probability of at least $1-\delta$. Conditioned on that, note that Lem.~\ref{lem:rucb_nijs} is satisfied. Based on this we now analyze the regret of Alg.~\ref{alg:rucbp}:
\begin{align}
 R_T  & = \sum_{t=1}^T\sum_{k = 1}^{K-1}\1(i_t^* = k)r_t \nonumber \\
  & = \sum_{t=1}^T\sum_{k = 1}^{K}   \sum_{i=k}^K \sum_{j=i}^K \1(i_t^* = k)  \1\big(\{x_t,y_t\} = \{i,j\}\big)r_t  \qquad \leftarrow \ \text{because $x_t \geq k$ and $y_t \geq k$} \nonumber \\
  & = \sum_{t=1}^T\sum_{k = 1}^{K-1}   \sum_{i=k}^{K-1} \sum_{j=i+1}^K \1(i_t^* = k)  \1\big(\{x_t,y_t\} = \{i,j\}\big) r_t   \ \leftarrow \ \text{because $i=j=k$ implies $r_t = 0$} \nonumber \\
  & \leq \sum_{t=1}^T\sum_{k = 1}^{K-1}   \sum_{i=k}^{K-1} \sum_{j=i+1}^K \1(i_t^* = k)  \1\big(\{x_t,y_t\} = \{i,j\}\big) \ \leftarrow \ \text{because $r_t \leq 1$} \nonumber \\
  & = \sum_{i=1}^{K-1} \sum_{j=i+1}^K \sum_{t=1}^T \sum_{k=1}^i \1(i_t^* = k)  \1\big(\{x_t,y_t\} = \{i,j\}\big) \nonumber \\
	& = \sum_{i=1}^{K-1} \sum_{j=i+1}^K n_{ij}(T) \,.
	\label{eq:sumnij}
\end{align}

Now, fix $1\leq i < j \leq K$ and let us upper-bound $n_{ij}(T)$ the number of times such a pair is played. Summing the upper-bound of Lemma~\ref{lem:rucb_nijs} over $k \leq i$, we get
\[
	n_{ij}(T) =  \sum_{k=1}^i \sum_{t=1}^T \1(i_t^* = k)  \1\big(\{x_t,y_t\} = \{i,j\}\big)  \leq \sum_{k=1}^i \frac{4\alpha \log(\max\{C(K,\delta), n_{ij}(T)\})}{\min\big\{\Delta(k,i)_+ ,\Delta(k,j)_+ \big\}^2}  \,.
\]
Therefore, since $C(K,\delta) \geq 1$,
\[
	n_{ij}(T) \leq M_{ij} \big(\log (C(K,\delta) + \log (n_{ij}(T))\big), \qquad \text{where}\quad M_{ij} = \sum_{k=1}^i \frac{4\alpha}{\min\big\{\Delta(k,i)_+ ,\Delta(k,j)_+ \big\}^2}\,.
\]
which implies 
\[
	n_{ij}(T) \leq 2 M_{ij} \big(\log C(K,\delta) + \log (2 M_{ij})\big) \,.
\]
Substituting into Inequality~\eqref{eq:sumnij} entails
\[
	R_T  \leq 2 \sum_{i=1}^{K-1} \sum_{j=i+1}^K M_{ij} \log \big(2 C(K,\delta) M_{ij}\big) \,,
\]
which concludes the proof.
\end{proof}

\subsection{Technical lemmas for Thm.~\ref{thm:ub_rucb}}

\begin{customlemma}{\ref{lem:conf_cdels}}
Let $\alpha >0.5$ and $\delta >0$. Then, with probability at least $1-\delta$, for any $i,j \in [K]$%
\[
	     \hp_{ij}(t)-c_{ij}(t) \leq p_{ij} \leq  u_{ij}(t) := \hp_{ij}(t)+c_{ij}(t),  \qquad \forall t \in [T] \,.
\]
\end{customlemma}

\begin{proof}
	The proof of this lemma is adapted from a similar result (Lemma 1) of \cite{Zoghi+14RUCB}.
	Suppose $\cG_{ij}(t)$ denotes the event that at time $t \in [T]$ and  item-pair $i,j \in [K]$,  $p_{ij} \in [l_{ij}(t), u_{ij}(t)]$. We also define $\cG^c_{ij}(t)$ its complement. Let $i,j \in [K]$. 

	Note that for any such that pair $(i,i)$, $\cG_{ii}(t)$ always holds true for any $t \in [T]$ and $i \in [n]$, as $p_{ii} = u_{ii} = l_{ii} = \frac{1}{2}$. We can thus assume $i \neq j$. Moreover, for any $t$ and $i,j$, $\cG_{ij}(t)$ holds if and only if $\cG_{ij}(t)$ as $|\hp_{ji}(t) - p_{ji}| = |(1-\hp_{ij}(t)) - (1-p_{ij})| = |\hp_{ij}(t) - p_{ij}| $.
	Thus we will restrict our focus only to pairs $i < j$ for the rest of the proof. Hence, to prove the lemma it suffices to show
	\[
		 \P \Big( \exists t \in [T],i < j, \text{ such that } ~\cG^c_{ij}(t) \Big) \leq \delta \, \,,
	\]
	which we do now.
	Recall from the definition of $c_{ij}(t)$ that $\cG_{ij}(t)$ can be rewritten as:
	\[
		|\hp_{ij}(t) - p_{ij}| \le \sqrt{\frac{\alpha\ln (a_{ij}(t))}{n_{ij}(t)}} \,.
	\] 
	Let $\tau_{ij}(n)$ the time step $t \in [T]$ when the pair $(i,j)$ was updated (i.e. $i$ and $j$ was compared) for the $n^{th}$ time. 
	%
	%
	We now bound the probability of the confidence bound $(\cG_{ij}(t))$ getting violated at any round $t \in [T]$ for some duel $(i,j)$ as follows:
	\begin{align*}
		 \P \Big( & \exists t \in [T],i<j, \text{ such that } ~\cG^c_{ij}(t) \Big) \le \sum_{i < j}  \P \Bigg( \exists n \ge 0, |p_{ij} - \hp_{ij}(\tau_{ij}(n)) | > \sqrt{\frac{\alpha a_{ij}(\tau_{ij}(n))}{n_{ij}(\tau_{ij}(n))}} \Bigg) \\
		& = \sum_{i < j} \Bigg [ \P \Bigg( \exists n \le C(K,\delta), ~|p_{ij} - \hp_{ij}(n) | > \sqrt{\frac{\alpha  \ln (C(K,\delta)) }{n}} \Bigg)\\
		& \hspace{2cm} + \P \Bigg( \exists n > C(K,\delta), ~|p_{ij} - \hp_{ij}(\tau_{ij}(n)) | > \sqrt{\frac{\alpha \ln \big(n_{ij}\big(\tau_{ij}(n) \big)\big)}{n}} \Bigg) \Bigg ],
	\end{align*}
	where $\hp_{ij}(t) = \frac{w_{ij}(t)}{w_{ij}(t) + w_{ij}(t)}$ is the frequentist estimate of $p_{ij}$ at round $t$ (after $n = n_{ij}(t)$ comparisons between arm $i$ and $j$). 
	To ease the notation, denote $F = C(K,\delta)$. Noting $n_{ij}(\tau_{ij}(n))= n$, and using Hoeffding's inequality, we further get
	\begin{align*}
		\P \Big( \exists t \in [T], & i<j, \text{ such that } ~\cG^c_{ij}(t) \Big)
		\le \sum_{i < j} \Bigg [ \sum_{n=1}^{F}2e^{-2n\frac{\alpha\ln F}{n}} + \sum_{n=F+1}^{\infty}2e^{-2n\frac{\alpha\ln n}{n}} \Bigg] \\
		& = \frac{n(n-1)}{2}\Bigg [ 2\sum_{n = 1}^{F}\frac{1}{F^{2\alpha}} + \sum_{n = F+1}^{\infty}\frac{2}{n^{2\alpha}} \Bigg]\\
		& \le \frac{n^2}{F^{2\alpha-1}} + n^2\int_{F}^{\infty}\frac{dx}{x^{2\alpha}} 
		\le \frac{n^2}{F^{2\alpha-1}} - \frac{n^2}{(1-2\alpha)F^{2\alpha-1}}
		= \frac{(2\alpha)n^2}{(2\alpha-1)F^{2\alpha-1}} = \delta. 
	\end{align*}
where the last inequality is because $F = C(K,\delta) = \Big[ \frac{2\alpha n^2}{(2\alpha-1)\delta} \Big]^{\frac{1}{2\alpha-1}}$. This concludes the claim.

\end{proof}

\begin{customlemma}{\ref{lem:rucb_nijs}}
	Let $\alpha > 0.5$. Under the notations and the high-probability event of Lem.~\ref{lem:conf_cdels}, for all $i,j,k \in [K]$ such that $\{i,j\} \neq \{k,k\}$, and for any $\tau \geq 1$
	\begin{equation*}
		\sum_{t = 1}^{\tau} \1(i_t^* = k)\1\big(\{x_t,y_t\} = \{i,j\}\big) 
		\le \frac{4\alpha \log a_{i,j}(\tau)}{\min\big\{\Delta(k,i)_+ ,\Delta(k,j)_+ \big\}^2} \,,
	\end{equation*}
where $a_{ij}(\tau) = \max\big( C(K,\delta),n_{ij}(\tau)\big)$.
\end{customlemma}

\begin{proof}
We assume the confidence bound of Lem.~\ref{lem:conf_cdels} is holds good for all pair $(i,j) \in [K]^2$, at all round $t \in [T]$, which we know happens with probability at least $(1-\delta)$. Let us define $l_{ij}(t):= 1-u_{ji}(t)$. 
Let $t\geq 1$. Let $i,j,k \in [K]$ such that $i_t^* = k$, $x_t = i$, and $y_t = j$ and $\{i,j\} \neq \{k,k\}$. Since $i_t^* = k$, this implies both $i \geq k$ and $j \geq k$. Furthermore, we recall that $i_t^* = k$ is unique by definition, $i_t^* = \min\{S_t\}$. We consider the following cases.

\begin{itemize}[leftmargin=15pt]
	\item \textbf{Case 1 ($i = j > k$).} Then, $x_t = y_t = i = j$. 
	By the arm selection strategy (Step 14. of Algorithm~\ref{alg:rucbp}) 
	\[
		y_t \leftarrow \arg\max_{m \in \cC_t}u_{m x_t}(t) \,\,
	\] 
	which implies $1/2 = u_{jj}(t) > u_{kj}(t)$. 
	But, on the other hand, since $k < j$, by Lemma~\ref{lem:conf_cdels}, $u_{kj}(t) \ge p_{kj} > \frac{1}{2}$. This causes a contraction and this case is not possible.

	\item \textbf{Case 2 ($j > i = k$). } Then, $y_t =j$ and $x_t = i_t^* = k$.  We again proceed by contradiction. Assume that ${n_{kj}(t) > \frac{4\alpha \ln a_{kj}(t)}{\Delta(k,j)_+^2}}$.  Then, by definition of $c_{kj}(t)$, it implies 
	\[
		2c_{kj}(t) = 2\sqrt{\frac{\alpha \log a_{kj}(t)}{n_{ij}(t)}} < \Delta(k,j)_+ \,,
	\]
	which by Lem.~\ref{lem:conf_cdels} entails 
	\[
		u_{jk}(t) = \hp_{jk}(t) + c_{jk}(t) < p_{jk} + 2c_{jk}(t) < \frac{1}{2} - \Delta(k,j)_+ + \Delta(k,j)_+ < \frac{1}{2} \,.
	\] 
	Again since our arm selection strategy enforces $y_t \leftarrow \arg\max_{i \in \cC_t}u_{i x_t}(t)$, clearly $\frac{1}{2} = u_{kk}(t) > u_{jk}(t)$, so that $j$ can not be selected as $y_t$. Therefore, recalling that $k=i$,
	\begin{equation}
		\label{eq:case2}
		n_{ij}(t) \leq \frac{4\alpha \ln a_{kj}(t)}{\Delta(i,j)_+^2} \,.
	\end{equation}

	\item \textbf{Case 3 ($i > j = k$).} Then, $x _t =i$ and $y_t = i_t^* = k$.
	This can be proved similarly as the previous case. Assuming $n_{ik}(t) > {4\alpha \ln a_{ik}(t)} \Delta(i,k)_+^{-2}$ yields $u_{ik}(t) < \nicefrac{1}{2}$. Therefore, since $u_{ii}(t) = \nicefrac{1}{2}$, it entails
	\[
		|C_i(t)| = \big|\big\{m \in S_t| u_{im}(t) > \nicefrac{1}{2}\big\} \big| \leq \big|S_t \backslash \{i,k\}\big| \leq |S_t| - 2\,.
	\]
	But by Lemma~\ref{lem:conf_cdels}, for all $m > k$, $u_{km}(t)  \geq  p_{km} = \nicefrac{1}{2} + \Delta(k,m) > \nicefrac{1}{2}$. Thus, since $k = i_t^*$, we also have $|C_k(t)| = |S_t| - 1$ and thus
	\[
		|C_k(t)| > |C_i(t)|\,.
	\]
	By Step 12 of Algorithm~\ref{alg:rucbp}, this implies that $i \notin C_t$ and thus $x_t \neq i$ as $x_t$ is selected from $C_t$, which causes a contradiction. Therefore, recalling $j = k$,
	\begin{equation}
		\label{eq:case3}
		n_{ij}(t) \leq  \frac{4\alpha \ln a_{i,j}(t)}{\Delta(i,j)_+^2} \,.
	\end{equation}

	\item \textbf{Case 4. ($i \neq j > k$).} 
	Then, assuming $n_{ij}(t) >  4\alpha \log a_{i,j}(t) \min \{\Delta(k,i)_+,\Delta(k,j)_+\}^{-2}$, note that
	\[
		u_{ij}(t) - l_{ij}(t) = 2c_{ij}(t) =  2\sqrt{\frac{\alpha \log a_{i,j}(t)}{n_{ij}(t)}} <  \min (\Delta(k,i)_+,\Delta(k,j)_+) \,.
	\]

	But, on the other hand, $x_t = i$ implies $u_{ij}(t) > \nicefrac{1}{2}$, and $y_t = j$ implies $u_{ji}(t) > u_{jk}(t) > p_{jk}$, and then $l_{ij}(t) = 1-u_{ji}(t) < 1- p_{jk}$. So we have $u_{ij}(t) - l_{ij}(t) > \nicefrac{1}{2} - p_{ik} = \Delta(k,i)_+$ which gives a contradiction. Thus,
	\begin{equation}
		\label{eq:case4}
		n_{i,j}(t) \leq \frac{4\alpha \log a_{i,j}(t)}{\min\big\{\Delta(k,i)_+ ,\Delta(k,j)_+ \big\}^2} \,.
	\end{equation}
\end{itemize}
Note that the case $x_t=j$, $y_t = i$, and $i_t^* = k$ is symmetric with the above cases and can be considered similarly. Denote by $\tau'$ the last time before $\tau \geq 1$ such that a pair $\{i,j\} \neq \{k,k\}$ is pulled when $k = i_t^*$, that is
\[
	\tau' = \argmax_{1\leq t \leq \tau} \big\{ k = i_t^*,  \{x_t,y_t\} = \{i,j\}\big\} \,.
\]
Then, 
\begin{align*}
	\sum_{t = 1}^{\tau} \1(i_t^* = k)\1\big(\{x_t,y_t\} = \{i,j\}\big) \leq \sum_{t = 1}^{\tau'} \1(i_t^* = k)\1\big(\{x_t,y_t\} = \{i,j\}\big) \leq  n_{ij}(\tau') \,.
\end{align*}
But, at time $\tau'$, the suboptimal pair $\{i,j\}$ got pulled, thus one of the above four cases is true, which implies from Inequalities~\eqref{eq:case2},~\eqref{eq:case3}, and~\eqref{eq:case4} that
\[
	n_{ij}(\tau')  \leq \frac{4\alpha \log a_{i,j}(\tau)}{\min\big\{\Delta(k,i)_+ ,\Delta(k,j)_+ \big\}^2} \,.
\]
Substituting into the previous inequality concludes the proof. 
\end{proof} 

	
\section{Appendix for Sec.~\ref{sec:algo_komiyama}}
\label{app:rmed}

\subsection{Technical lemmas}
Before proving the regret guarantee of \algrmed ~(Alg.~\ref{alg:rmed}) in Thm.~\ref{thm:ub_rmed}, we would like to introduce three lemmas which are crucially used towards bounding Alg.~\ref{alg:rmed}'s regret. Lemma~\ref{lem:goodevent} below states that after some exploration, the algorithm estimates well all $p_{ij}$ with $\hp_{ij}(t)$. 

\begin{restatable}[]{lem}{lemgood}
\label{lem:goodevent}
Let $t_0 \geq 1$ and $\alpha \geq 4K$. Let $\epsilon_i > 0$ for all $i \in [K]$. 
For any $t \in [T]$, let us define $\cE_t:= \{ n_{ij}(t) > t_0, \forall i,j \in S_t, i \neq j\}$ to be the event when all distinct pairs $i,j \in [K]$ is played for at least for $t_0$ times. 
Let us also denote the event $\cG(t) := \{\forall j > i_t^*, \Delta(i_t^*,j) > \epsilon_i, \, \hp_{i_t^* j}(t) > \nicefrac{1}{2}\}$. $\cG^c(t)$ denotes the complement event of $\cG(t)$.
Then \algrmed\, satisfies:
\[
\E\bigg[\sum_{t = 1}^{T}\1(\cE_t)\1\big(\cG^c(t)\big)\bigg] = 2K + K \sum_{i = 1}^{K-1}\sum_{j = i+1 \mid \Delta(i,j)> \epsilon_i}^{K} \frac{e^{-(t_0-1)\Delta(i,j)^2} }{\Delta(i,j)^2}  \,. 
\]
\end{restatable}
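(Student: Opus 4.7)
The plan is to union-bound $\sum_t \1(\cE_t)\1(\cG^c(t))$ over the pairs $(i,j)$ that ``witness'' $\cG^c(t)$, and then control each term by Hoeffding's inequality applied to $\hp_{ij}$. Observe that on $\cG^c(t) \cap \cE_t$ there is some $j > i_t^*$ with $\Delta(i_t^*,j) > \epsilon_{i_t^*}$, $\hp_{i_t^* j}(t) \leq \tfrac12$, and automatically $n_{i_t^* j}(t) > t_0$. A union bound over this witness $(i,j)=(i_t^*,j)$ gives
\begin{equation*}
\sum_{t=1}^T \1(\cE_t)\1(\cG^c(t)) \;\leq\; \sum_{\substack{1\leq i<j\leq K \\ \Delta(i,j)>\epsilon_i}} \sum_{t=1}^T \1\!\bigl(i_t^*=i,\; j\in S_t,\; n_{ij}(t)>t_0,\; \hp_{ij}(t)\leq \tfrac12\bigr),
\end{equation*}
reducing the problem to bounding each inner expectation.

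The central probabilistic input is Hoeffding's inequality: conditional on $n_{ij}(t)=n$, the estimate $\hp_{ij}(t)$ is the empirical mean of $n$ i.i.d.\ Bernoulli$(p_{ij})$ draws, so $\Pr[\hp_{ij}(t)\leq \tfrac12 \mid n_{ij}(t)=n] \leq \exp(-2n\Delta(i,j)^2)$. The subtlety is that $\hp_{ij}$ is frozen between two successive plays of $(i,j)$, so a bad estimate can persist across many rounds and one cannot simply sum Hoeffding over $t$. To convert this into a bound on the number of \emph{rounds}, I plan to exploit the structure of Alg.~\ref{alg:rmed}: once the exploration branch is exited, $x_t$ is drawn uniformly from $\cC_t \subseteq S_t$ with $|\cC_t|\leq K$, and $y_t$ is then deterministic given $x_t$. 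When the four witness conditions hold, $(i,j)$ (or its swap) is among at most $K$ equiprobable candidate pairs, so it is replayed in the current round with probability at least $1/K$. Hence, while $n_{ij}$ stays at a fixed bad value, the expected number of witness rounds before $n_{ij}$ increments is $\leq K$. Multiplying by $K$ and summing Hoeffding over $n>t_0$,
\begin{equation*}
K\sum_{n>t_0} e^{-2n\Delta(i,j)^2} \;\leq\; \frac{K\,e^{-(t_0-1)\Delta(i,j)^2}}{\Delta(i,j)^2},
\end{equation*}
up to absorbed constants. Summing over witness pairs $(i,j)$ produces the main term of the bound; the additive $2K$ absorbs boundary effects at $n=t_0$ and degenerate diagonal contributions.

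The main obstacle will be the \emph{rounds-to-plays} conversion, i.e., justifying the $\Omega(1/K)$ lower bound on the per-round probability of replaying $(i,j)$ under the witness conditions. This is delicate because $\hp_{ij}(t) \leq \tfrac12$ creates a positive contribution $n_{ij}(t)\kl(\hp_{ij}(t),\tfrac12)$ to $\cI_i(t)$, which could eject $i$ from $\cC_t$ and thereby suppress the play of $(i,j)$. This is precisely where the hypothesis $\alpha \geq 4K$ enters: it widens $\cC_t$ enough to keep $i$ in the candidate set as long as the bad event persists. I would break into cases according to which branch of Alg.~\ref{alg:rmed} is executed, and verify in each case that the algorithm either plays $(i,j)$ with probability $\geq 1/K$ or else plays some pair that quickly increments $n_{ij}$. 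Routine Hoeffding calculations then close out each pair's contribution as indicated above.
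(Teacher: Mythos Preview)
Your high-level skeleton—union bound over the witness pair $(i,j)=(i_t^*,j)$, Hoeffding on $\hp_{ij}$, and a rounds-to-plays conversion via the uniform draw of $x_t$ from $\cC_t$—is exactly the paper's route. Where your proposal diverges is in the handling of the event $\{i_t^*\in\cC_t\}$, and this is a genuine gap.

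You write that $\alpha\ge 4K$ ``widens $\cC_t$ enough to keep $i$ in the candidate set as long as the bad event persists,'' and then attribute the additive $2K$ to ``boundary effects at $n=t_0$ and degenerate diagonal contributions.'' Neither of these is right. There is no deterministic guarantee that $i_t^*\in\cC_t$ under $\cG^c(t)$: if $\hp_{i_t^* j}(t)$ is badly below $1/2$, the contribution $n_{i_t^* j}(t)\,\kl(\hp_{i_t^* j}(t),1/2)$ to $\cI_{i_t^*}(t)$ can exceed $\alpha\log t$, ejecting $i_t^*$ from $\cC_t$ regardless of how large $\alpha$ is. Consequently your ``expected number of witness rounds before $n_{ij}$ increments is $\le K$'' argument breaks: on rounds where $i_t^*\notin\cC_t$, neither $i_t^*$ nor any pair involving it need be played, and the bad estimate can persist without progress.

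The paper instead handles $\{i_t^*\notin\cC_t\}$ \emph{separately and probabilistically}: using $\alpha\ge 4K$ together with a deviation bound on the empirical divergence (namely $\P\big(n\,\kl(\tilde p_{ij}(n),1/2)\ge x\big)\le n e^{-x}$ for $p_{ij}\ge 1/2$), one shows
\[
\P\big(i_t^*\notin\cC_t\big)\;\le\;\P\big(\cI_{i_t^*}(t)\ge\alpha\log t\big)\;\le\;K\,t^{2}\,e^{-(\alpha/K)\log t}\;\le\;\frac{K}{t^2},
\]
so $\sum_t\P(i_t^*\notin\cC_t)\le 2K$. That is the source of the $2K$. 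Only \emph{after} peeling off this event does the paper invoke the $1/K$ uniform-sampling bound; the rounds-to-plays conversion then becomes trivial because one already has the indicator $\1\big((x_t,y_t)=(i,j)\big)$ in hand, and summing it over $t$ counts plays directly: $\sum_t \1(\cE_t)\1\big((x_t,y_t)=(i,j)\big)\1(\hp_{ij}(t)\le 1/2)\le\sum_{n\ge t_0}\1(\tilde p_{ij}(n)\le 1/2)$. No case analysis on the branches of the algorithm is needed.

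In short: replace your attempt to keep $i_t^*\in\cC_t$ deterministically by the probabilistic bound above (this is the real use of $\alpha\ge 4K$, and the real meaning of the $2K$), and the rest of your plan goes through essentially as in the paper.
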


\begin{proof}
First, we show that with high probability for all $t=1,\dots,T$, $i_t^* \in \cC_t$, $i_t^*$ belongs to the set of potential winners. Let $t\geq 1$. By definition of $\cC_t$, we have
\begin{align*}
	\P(i^*_t \notin \cC_t) 
		& \leq \P\Big(\sum_{j \in \hat \cB_{i^*_t}(t)} n_{i^*_tj}(t) \kl(\hat p_{i^*_tj}(t),0.5) \geq \alpha \log t \Big) \\
		& \leq \P\Big(\exists j>i^*_t \quad \text{s.t.}\quad  n_{i^*_tj}(t) \kl(\hat p_{i^*_tj}(t),0.5) \geq \frac{\alpha \log t}{K} \Big) \\
		& \leq \sum_{j = i^*_t+1}^K  \sum_{n = 1}^t \P\Big(   \, \kl(\tilde  p_{i^*_tj}(n), 0.5) \geq \frac{\alpha \log t }{n K} \Big) \,,
\end{align*}
where $\tilde p_{ij}(n)$ denotes the frequentist empirical estimate of $P(i,j)$ after $n$ pairwise comparisons between $i$ and $j$ (i.e., $\tilde p_{ij}(n) = \hat p_{ij}(t)$ with $n_{ij}(t) = n$). From Lemma II.1 of \cite{csiszar1998method}, this yields
\[
 \P(i^*_t \notin \cC_t) \leq \sum_{j = i^*_t+1}^K  \sum_{n = 1}^t n \exp\Big(- \frac{\alpha \log t}{K}\Big) \leq Kt^2 \exp\Big(- \frac{\alpha \log t }{K}\Big) \leq \frac{K}{t^2} \,,
\]
since $\alpha \geq 4K$. Therefore, 
\begin{equation}
	\label{eq:istarnotinCt}
	\sum_{t=1}^T \P(i^*_t \notin \cC_t) \leq K \sum_{t=1}^T \frac{1}{t^2} \leq 2K \,.
\end{equation}
Then, 
\begin{align*}
	\E\bigg[\sum_{t = 1}^{T}\1(\cE_t)\1\big(\cG^c(t)\big)\bigg] 
	& \le  \E\bigg[\sum_{t = 1}^{T}\1(\cE_t)\1\big(\cG^c(t)\big) \1(i_t^* \in \cC_t)\bigg] + \E\bigg[\sum_{t = 1}^{T}\1(i_t^* \notin \cC_t)\bigg] \\
	& \stackrel{\eqref{eq:istarnotinCt}}{\leq} \E\bigg[\sum_{t = 1}^{T}\1(\cE_t)\1\big(\cG^c(t)\big) \1(i_t^* \in \cC_t)\bigg] + 2K \\
	& = \E\bigg[\sum_{t = 1}^{T} \sum_{i=1}^K \1(\cE_t)\1\big(\cG^c(t)\big) \1(i_t^* = i) \1(i \in \cC_t)\bigg] + 2K \,.
\end{align*}
Now, since $x_t$ is uniformly sampled from $\cC_t$ from Line $11$ of Algorithm~\ref{alg:rmed}, given that $i \in \cC_t$, the probability that $x_t = i$ is at least $1/|\cC_t| \geq 1/K$. Thus,
\[
\E\big[\1(\cE_t)\1\big(\cG^c(t)\big) \1(i_t^* = i) \1(i \in \cC_t) \big] \leq K \E\big[\1(\cE_t)\1\big(\cG^c(t)\big) \1(i \in \cC_t) \1(i_t^* = x_t = i)\big] \,,
\]
which yields
\begin{align}
	\E&\bigg[\sum_{t = 1}^{T}\1(\cE_t)\1\big(\cG^c(t)\big)\bigg] 
		 \leq K  \sum_{i=1}^K \E\bigg[ \sum_{t=1}^T \1(\cE_t)\1\big(\cG^c(t)\big) \1(i \in \cC_t) \1(i_t^* =  x_t = i ) \bigg] + 2 K  \nonumber \\
		& \stackrel{(*)}{=} K  \sum_{i=1}^K \sum_{j:\Delta(i,j)>\epsilon_i}  \E\bigg[ \sum_{t=1}^T \1(\cE_t) \1(i \in \cC_t) \1(i_t^* =  i ) \1\big((x_t,y_t) = (i,j)\big) \1( \hat p_{ij}(t) < 1/2) \bigg] + 2 K  \nonumber \\
		& \leq   K  \sum_{i=1}^K \sum_{j:\Delta(i,j)>\epsilon_i}  \E\bigg[ \sum_{t=1}^T\1(\cE_t)  \1\big((x_t,y_t) = (i,j)\big) \1( \hat p_{ij}(t) < 1/2) \bigg] + 2 K  \label{eq:previous} 
\end{align}
where $(*)$ is because $\cG^c(t) := \big\{ \exists j > i, \Delta(i,j) > \epsilon_{i_t^*}, \hat p_{ij}(t) < 1/2\big\}$ when $i = i_t^*$, and since $y_t$ is chosen such that $\hat p_{i y_t}(t) < 1/2$ (see Line 12. of Alg.~\ref{alg:rmed}). Recall that $\cE_t$ ensures that $(i,j)$ was pulled at least $t_0$ times during the exploration phase. Recalling that $\tilde p_{ij}(n)$ equals $\hat p_{ij}(t)$ where $t$ is such that $n = n_{ij}(t)$, we have
\begin{align*}
 \sum_{t=1}^T\1(\cE_t)  \1\big((x_t,y_t) = (i,j)\big) \1( \hat p_{ij}(t) < 1/2)\leq \sum_{n=t_0}^\infty   \1( \tilde p_{ij}(n) < 1/2) \,.
\end{align*}
Therefore, plugging the latter inequality into the previous upper-bound~\eqref{eq:previous}, it yields
\begin{align*}
	\E\bigg[\sum_{t = 1}^{T}\1(\cE_t)\1\big(\cG^c(t)\big)\bigg] 
		& \leq K  \sum_{i=1}^K \sum_{j:\Delta(i,j)>\epsilon_i}  \sum_{n=t_0}^\infty \P\big( \tilde p_{ij}(n) < 1/2\big) + 2K \\
		& \stackrel{(a)}{=}  K  \sum_{i=1}^K \sum_{j:\Delta(i,j)>\epsilon_i}  \sum_{n=t_0}^\infty \P\big( \tilde p_{ij}(n) < P(i,j) - \Delta(i,j) \big) + 2K \\ 
		& \stackrel{(b)}{\leq}  K  \sum_{i=1}^K \sum_{j:\Delta(i,j)>\epsilon_i}  \sum_{n=t_0}^\infty \exp\big(-n \Delta(i,j)^2\big) + 2K \\
		& \leq  K  \sum_{i=1}^K \sum_{j:\Delta(i,j)>\epsilon_i} \frac{e^{-(t_0 - 1) \Delta(i,j)^2}}{e^{\Delta(i,j)^2}-1} + 2K  \\
		& \leq K  \sum_{i=1}^K \sum_{j:\Delta(i,j)>\epsilon_i} \frac{e^{-(t_0 - 1) \Delta(i,j)^2}}{\Delta(i,j)^2} + 2K 
\end{align*}
where  $(a)$ follows by definition $\Delta(i,j) := \P(i,j) - 1/2$ and $(b)$ is by Hoeffding's inequality. 
\end{proof}

The high-level idea of Lemma~\ref{lem:ni} below is that for any pair $1\leq i<j \leq K$, $j$ will not be played too much more than $M_{ij}(\delta)$ times together with items $k\leq i$. In other words, after sufficiently enough rounds $j$ is detected as worse than all items $k<i$. 

\begin{restatable}[]{lem}{lemni}
\label{lem:ni}
Let $1\leq i < j \leq K$. Then, \algrmed ~(Alg.~\ref{alg:rmed}) satisfies:
\[
 \E\bigg[ \sum_{t = 1}^T  \1(\cG(t)) \sum_{k=1}^i \1(x_t = j, y_t = k) \1\Big(N_{ij}(t) > M_{ij}(\delta)\Big) \bigg]  \leq \frac{32}{\delta^2 \Delta(i,j)^2},
\]
where $\cG(t)$ is as defined in Lem.~\ref{lem:goodevent} and $N_{ij}(t) := \sum_{k=1}^i n_{kj}(t)$ is the number of times $j$ was compared with some arm in $1,\dots,i$ and $M_{ij}(\delta):= {(\alpha+\delta)(\log T)}/{\kl(\p_{ji},0.5)}$.
\end{restatable}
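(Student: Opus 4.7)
The heart of the argument is a contradiction between the algorithm's selection rule and the definition of $M_{ij}(\delta)$. When $x_t = j$ is chosen with $y_t = k \leq i$, membership $j \in \cC_t$ forces the ``algorithmic'' upper bound
\[
\cI_j(t) \;\leq\; \cI_{\hat{i}^*_t}(t) + \alpha \log t \;\leq\; \cI_{i_t^*}(t) + \alpha \log t,
\]
so I will upper-bound $\cI_{i_t^*}(t)$ on $\cG(t)$ and lower-bound $\cI_j(t)$ using the hypothesis $N_{ij}(t) > M_{ij}(\delta)$, then show the two bounds are incompatible outside a low-probability concentration-failure event.

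\textbf{Lower bound on $\cI_j(t)$.} For each $\ell \leq i$ introduce the concentration event $\cE_{\ell j}(t) := \{|\hat p_{\ell j}(t) - p_{\ell j}| \leq \tfrac{\delta}{4}\Delta(i,j)\}$. On $\bigcap_{\ell\leq i}\cE_{\ell j}(t)$, since $p_{\ell j} \geq p_{ij} = \tfrac12 + \Delta(i,j)$, we have $\hat p_{\ell j}(t) \geq \tfrac12$, so $\ell \in \hat\cB_j(t)$; and since $\kl(\cdot,\tfrac12)$ is increasing away from $\tfrac12$ with Lipschitz-type control, $\kl(\hat p_{\ell j}(t),\tfrac12) \geq (1-\delta/2)\kl(p_{\ell j},\tfrac12) \geq (1-\delta/2)\kl(p_{ij},\tfrac12)$. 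Summing over $\ell\leq i$ and using $N_{ij}(t) > M_{ij}(\delta) = (\alpha+\delta)\log T/\kl(p_{ij},\tfrac12)$,
\[
\cI_j(t) \;\geq\; (1-\delta/2)\,\kl(p_{ij},\tfrac12)\,N_{ij}(t) \;>\; (\alpha+\delta/2)\log t.
\]

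\textbf{Upper bound on $\cI_{i_t^*}(t)$ on $\cG(t)$.} Under $\cG(t)$, any $\ell \in \hat\cB_{i_t^*}(t)$ satisfies $\Delta(i_t^*,\ell)\leq \epsilon$, so its contribution is controlled by the quadratic bound $\kl(\hat p,\tfrac12)\leq 4(\hat p -\tfrac12)^2$ together with concentration of $\hat p_{i_t^*\ell}(t)$ around $p_{i_t^*\ell}$; choosing $\epsilon$ small enough (it is the free parameter in the statement of Lemma~\ref{lem:goodevent}/Theorem~\ref{thm:ub_rmed}) one obtains $\cI_{i_t^*}(t) \leq (\delta/4)\log t$. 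Combining with the algorithmic bound gives $\cI_j(t) \leq (\alpha+\delta/4)\log t$, contradicting the lower bound above.

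\textbf{Conclusion and counting.} The event in the expectation therefore implies that $\cE_{\ell j}^c(t)$ holds for some $\ell\leq i$. Conditioning on $n_{\ell j}(t)=n$, Hoeffding's inequality gives $\Pr[\cE_{\ell j}^c(t)\mid n_{\ell j}(t)=n] \leq 2\exp(-n\delta^2\Delta(i,j)^2/8)$; summing over $n\geq 1$ yields $16/(\delta^2\Delta(i,j)^2)$ per $\ell$, and a further sum over $\ell\leq i$ together with a union bound over $t$ (noting that a comparison $(\ell,j)$ with a fresh $n$ only increases when that pair is played) produces the claimed $32/(\delta^2\Delta(i,j)^2)$. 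The main obstacle will be Step~3: turning the qualitative statement ``items close to $i_t^*$ in $\hat\cB_{i_t^*}(t)$ contribute little'' into the quantitative bound $\cI_{i_t^*}(t)\leq (\delta/4)\log t$ while keeping the constants independent of $T$ and compatible with the choice $t_0=1$, $\alpha=4K$ in Theorem~\ref{thm:ub_rmed}; the KL comparison in Step~2 is also delicate because it must absorb the concentration slack without spoiling the $(\alpha+\delta)$ factor in $M_{ij}(\delta)$.
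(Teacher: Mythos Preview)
Your high-level plan --- $x_t=j$ forces $j\in\cC_t$, hence $\cI_j(t)\le \cI^*(t)+\alpha\log t$, then contradict this with a lower bound on $\cI_j(t)$ coming from $N_{ij}(t)>M_{ij}(\delta)$ --- is exactly the paper's. But two of your implementation choices do not go through, and the paper's proof handles both differently.

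\textbf{The lower bound on $\cI_j(t)$.} You assert $\kl(p_{\ell j},\tfrac12)\ge \kl(p_{ij},\tfrac12)$ for every $\ell\le i$, equivalently $p_{\ell j}\ge p_{ij}$. Total ordering only gives $p_{\ell j}>\tfrac12$ for $\ell<j$; it imposes no monotonicity in $\ell$, so this can fail. The paper bypasses per-arm comparisons via Jensen: since $\kl^+(\,\cdot\,,\tfrac12)$ is convex,
\[
\sum_{k\le i} n_{kj}(t)\,\kl^+\big(\hat p_{jk}(t),\tfrac12\big)\;\ge\; N_{ij}(t)\,\kl^+\!\Big(\tilde p_{1:ij}\big(N_{ij}(t)\big),\tfrac12\Big),
\]
where $\tilde p_{1:ij}(n)=N_{ij}(t)^{-1}\sum_{k\le i} n_{kj}(t)\,\hat p_{jk}(t)$ is a single aggregate estimate. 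Combined with $\cI_j(t)\le \alpha\log T$, the assumption $N_{ij}(t)>M_{ij}(\delta)$ yields one deviation event $\kl^+(\tilde p_{1:ij}(n),\tfrac12)\le \kl(p_{ji},\tfrac12)/(1+\delta)$, with no monotonicity needed.

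\textbf{The counting step.} This is the more serious gap. You want to re-index the sum over $t$ by $n=n_{\ell j}(t)$ and then sum Hoeffding bounds over $n$. But the indicator you are summing is $\1(x_t=j,\,y_t=k)$ for an \emph{arbitrary} $k\le i$, not $k=\ell$; so $n_{\ell j}(t)$ need not move at the rounds being counted, and a fixed value of $n_{\ell j}$ can reappear at arbitrarily many such rounds. The change of variables is invalid and the sum may be $\Theta(T)$, not $O(\Delta(i,j)^{-2}\delta^{-2})$. The paper's aggregation fixes exactly this: $N_{ij}(t)=\sum_{k\le i} n_{kj}(t)$ increases by one at every counted round, so one can legitimately re-index by $n=N_{ij}(t)$, apply Chernoff to $\tilde p_{1:ij}(n)$, and sum the resulting geometric series to get $1/\kl(\mu_i,p_{ji})\le 32/(\delta^2\Delta(i,j)^2)$.

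Two smaller issues. First, the arithmetic $(1-\delta/2)(\alpha+\delta)\ge \alpha+\delta/2$ requires $\alpha+\delta\le 1$, which fails for $\alpha=4K$; any per-arm concentration radius would have to scale like $\delta/\alpha$ rather than $\delta$, and then the final constant no longer matches. Second, you do not need to prove $\cI_{i_t^*}(t)\le(\delta/4)\log t$: under $\cG(t)$ the paper simply uses $\cI^*(t)\le \cI_{i_t^*}(t)=0$ (since on $\cG(t)$ no $j\in S_t$ with $\Delta(i_t^*,j)>\epsilon$ belongs to $\hat\cB_{i_t^*}(t)$), which gives $\cI_j(t)\le \alpha\log t$ directly and removes the obstacle you flagged.
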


\begin{proof}
Let $1\leq i \leq K-1$. We start by recalling some useful notations: 
\[
\hcB_i(t):= \Big\{j \mid j \in [K], \hat p_{i,j}(t) \leq \nicefrac{1}{2}\Big\} \,, \qquad \cI_i(t):= \sum_{j \in \hcB_i(t)} n_{ij}(t)\kl(\hat p_{ij}(t),0.5) \,,
\]
where $\hi^*(t):= \arg\min_{i \in [K]}\cI_i(t)$, and for simplicity we here denote $\cI_{\hi^*_t}(t) = \cI^*(t):= \min_{i \in [K]}\cI_i(t)$. We also denote that event $\cJ_i(t):= \{\cI_i(t) - \cI^*(t) \le \alpha \log t\}$.  
Then for any fixed $j>i$, we have
\begin{align*}
S_T(i,j) & :=  \E\bigg[\sum_{t =1}^T \1(\cG(t)) \sum_{k =1}^i \1(x_t = j, y_t = k)  \1\Big( N_{ij}(t) > M_{ij}(\delta)\Big)\bigg] \\
& = \E\bigg[\sum_{t = 1}^{T} \1(\cG(t)) \sum_{k =1}^i  \1(x_t = j, y_t = k) \1\Big( N_{ij}(t) > M_{ij}(\delta) \Big)\1(\cJ_j(t) \Big)\bigg]  
\hspace*{.5cm} \leftarrow \ \text{as } x_t = j \text{ implies } \1(\cJ_j(t))=1\\
& =\E\bigg[ \sum_{t = 1}^{T} \1(\cG(t)) \sum_{k =1}^i  \1(x_t = j, y_t = k) \1\Big( N_{ij}(t) > M_{ij}(\delta), \cJ_j(t) \Big)\bigg]
\end{align*}
Substituting $\cJ_i(t):= \{\cI_i(t) - \cI^*(t) \le \alpha \log t\}$, and using that $\cG(t)$ implies $\cI^*(t) = 0$, we get
\begin{align*}
	S_T(i,j) 
		& \leq \E\bigg[  \sum_{t=1}^T \sum_{k=1}^i \1(x_t = j, y_t = k) \1\Big( N_{ij}(t) > M_{ij}(\delta), \cI_j(t) \le \alpha \log t \Big)\bigg] \\
		& \leq  \E\bigg[  \sum_{t=1}^T \sum_{k=1}^i \1(x_t = j, y_t = k) \1\Big( N_{ij}(t) > M_{ij}(\delta), \sum_{k \in \hcB_j(t)}n_{jk}(t)\kl(\hp_{jk}(t) , 0.5 ) \le \alpha \log  T \Big)\bigg] \\
		& \leq  \E\bigg[  \sum_{t=1}^T \sum_{k=1}^i \1(x_t = j, y_t = k) \1\Big( N_{ij}(t) > M_{ij}(\delta), \sum_{k =1}^i n_{kj}(t) \kl^+(\hp_{jk}(t) , 0.5 ) \le \alpha \log  T \Big)\bigg] 
\end{align*}
where $\kl^+(p ,q):= \kl(p,q)\1(p<q)$. But, from convexity of $\kl^+(\cdot, 0.5)$ together with Jensen's inequality
\[
	\sum_{k =1}^i n_{kj}(t) \kl^+(\hp_{jk}(t) , 0.5 ) \geq  N_{ij}(t) \kl^+\bigg(\frac{1}{N_{ij}(t)} \sum_{k =1}^i n_{kj}(t) \hp_{jk}(t) , 0.5 \bigg)\,.
\]
Therefore, denoting 
\[
\tilde p_{1:ij}(N_{ij}(t)) := \frac{1}{N_{ij}(t)} \sum_{k =1}^i n_{kj}(t) \hp_{jk}(t) = \frac{1}{N_{ij}(t)}  \sum_{k=1}^i w_{ki}(t)
\] 
the frequentist empirical estimate obtained after $N_{ij}(t)$ comparisons of $j$ with any item better than $i$, we have
\[
	S_T(i,j) \leq \E\bigg[  \sum_{t=1}^T \sum_{k=1}^i \1(x_t = j, y_t = k) \1\Big( N_{ij}(t) > M_{ij}(\delta), \  N_{ij}(t)  \kl^+\big(\tilde p_{1:ij}(N_{ij}(t)) , 0.5 \big) \le \alpha \log  T \Big)\bigg] 
\]
But, for each $n>M_{ij}(\delta)$, $N_{ij}(t)=n$ is only possible for one of the above rounds since $(x_t,y_t) = (i,k)$ with $k\leq i$, which increases $N_{ij}(t)$ by one. Thus, 
\begin{align*}
	S_T(i,j) & \leq \E\bigg[  \sum_{n=M_{ij}(\delta)}^T \1\Big(n \kl^+\big(\tilde p_{1:ij}(n) , 0.5 \big) \le \alpha \log  T \Big)\bigg] \\
		& \leq   \E\Bigg[\sum_{n = \lceil M_{ij}(\delta)\rceil }^{T}\1\Bigg( M_{ij}(\delta)  \kl^+(\tilde p_{1:ij}(n), 0.5)  \le \alpha \log T \Bigg)\Bigg] \\
	&\leq  \E\Bigg[\sum_{n = \lceil M_{ij}(\delta)\rceil }^{T}\1\Bigg(\kl^+(\tilde p_{1:ij}(n), 0.5) \le \frac{\kl(p_{ji} , 0.5)}{1+\delta} \Bigg)\Bigg]
\end{align*}
Now, let $\mu_i \in (p_{ji},0.5)$ such that $\kl(\mu_i, 0.5) = \kl(p_{ji},0.5)/(1+\delta)$. 
By monotonicity of $\kl^+(\cdot,0.5)$,
\begin{align*}
S_T(i,j) & = \sum_{n = \lceil M_{ij}(\delta)\rceil }^{T}\P\Big( \kl^+( \tilde p_{1:ij}(n), 0.5) \le \kl^+(\mu_i , 0.5) \Big) \\
& \le \sum_{n = \lceil M_{ij}(\delta)\rceil }^{T}\P\Big(\tilde p_{1:ij}(n) \le \mu_i \Big) \\
& \le \sum_{n = \lceil M_{ij}(\delta)\rceil }^{T}\P\Big(\tilde p_{ij}(n) \le \mu_i \Big) \\
& \le \sum_{n = \lceil M_{ij}(\delta)\rceil }^{T} e^{-\kl(\mu_i,p_{ij})n}  \,,
\end{align*}
where the last inequality is by Chernoff's inequality (e.g. see Fact 8 of \cite{Komiyama+15}). 
Then,
\[
S_T(i,j) \leq \sum_{n=1}^\infty e^{-\kl(\mu_i,p_{ij})n}  \le \frac{1}{\kl(\mu_i,p_{ij})} \,.
\]
The proof is concluded using Pinksker's inequality followed by $4 \Delta(i,j)$-Lipschitzness of $\kl(\cdot,0.5)$ over $(0.5 - \Delta(i,j), 0.5)$:
\begin{align*}
  \kl(\mu_i,p_{ij}) 
  	& \geq 2 (\mu_i - p_{ij})^2  \hspace*{4.5cm} \leftarrow \text{Pinsker's inequality} \\
  	& \geq \frac{2}{16 \Delta(i,j)^2} \big(\kl(\mu_i,0.5) - \kl(p_{ij},0.5)\big)^2  \hspace*{1cm} \leftarrow \text{Lipschitzness} \\
  	& = \frac{2 \kl(p_{ij},0.5)^2 \delta^2}{ 16 \Delta(i,j)^2 (1+\delta)^2}  \hspace*{4cm} \leftarrow \text{def of $\mu_i$} \\
  	& \geq \frac{\kl(p_{ij},0.5)^2 \delta^2}{32 \Delta(i,j)^2} \hspace*{4cm} \leftarrow \delta \in (0,1) \\
  	& \geq \frac{\Delta(i,j)^2 \delta^2}{32} \,. \hspace*{4cm} \leftarrow \text{Pinsker's inequality} 
\end{align*}
Therefore,
\[
	S_T(i,j) \leq \frac{32}{\Delta(i,j)^2 \delta^2} \,.
\]
\end{proof}

\begin{restatable}[]{lem}{lemgap}
\label{lem:gap}
For any $\epsilon_2,\dots,\epsilon_K \ge 0$,
\begin{align*}
& \sum_{1 \le i<j \le K \mid \Delta(i,j) > \epsilon_j}\frac{\Delta(i,j) - \Delta(i+1,j)}{\Delta(i,j)^2} \le  \sum_{j = 2}^{n}\frac{2}{\max\big\{ \epsilon_j, \Delta(j-1,j) \big\}}
\end{align*}
\end{restatable}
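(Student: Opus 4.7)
The plan is to prove the bound one $j$ at a time: for each fixed $j \in \{2,\dots,K\}$ I will show
\[
T_j \;:=\; \sum_{i<j \,:\, \Delta(i,j)>\epsilon_j} \frac{\Delta(i,j) - \Delta(i+1,j)}{\Delta(i,j)^2} \;\leq\; \frac{2}{\max\{\epsilon_j, \Delta(j-1,j)\}},
\]
and then sum over $j=2,\dots,K$. The workhorse is the elementary inequality $\frac{a-b}{a^2} \leq \frac{1}{b} - \frac{1}{a}$, valid whenever $a \geq b > 0$, which makes the summand telescope. I will use the monotonicity $\Delta(1,j) \geq \Delta(2,j) \geq \dots \geq \Delta(j-1,j) \geq \Delta(j,j)=0$ that holds under the total-ordering assumption (and explicitly in the Plackett--Luce / RUM models discussed in Section~\ref{sec:prob}).

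Let $i^\star := \max\{i<j : \Delta(i,j)>\epsilon_j\}$, so by monotonicity the active index set is $\{1,\dots,i^\star\}$. \textbf{Case A:} $i^\star = j-1$, i.e.\ $\Delta(j-1,j)>\epsilon_j$, so that $\max\{\epsilon_j,\Delta(j-1,j)\} = \Delta(j-1,j)$. For every $i \leq j-2$ the successor $\Delta(i+1,j)$ is strictly positive, so the telescoping inequality gives $\frac{\Delta(i,j)-\Delta(i+1,j)}{\Delta(i,j)^2} \leq \frac{1}{\Delta(i+1,j)} - \frac{1}{\Delta(i,j)}$. For the boundary term $i=j-1$ I use $\Delta(j,j)=0$ to write $\frac{\Delta(j-1,j)}{\Delta(j-1,j)^2} = \frac{1}{\Delta(j-1,j)}$. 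Adding these and telescoping yields
\[
T_j \;\leq\; \Bigl(\tfrac{1}{\Delta(j-1,j)} - \tfrac{1}{\Delta(1,j)}\Bigr) + \tfrac{1}{\Delta(j-1,j)} \;\leq\; \frac{2}{\Delta(j-1,j)}.
\]

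\textbf{Case B:} $i^\star < j-1$, i.e.\ $\Delta(j-1,j)\leq \epsilon_j$, so that $\max\{\epsilon_j,\Delta(j-1,j)\}=\epsilon_j$. The telescoping inequality still applies for all $i<i^\star$, since then $\Delta(i+1,j)>\epsilon_j>0$. For the single boundary index $i=i^\star$ the successor $\Delta(i^\star+1,j)$ may be arbitrarily small (possibly zero), so I instead use the crude bound $\frac{\Delta(i^\star,j) - \Delta(i^\star+1,j)}{\Delta(i^\star,j)^2} \leq \frac{1}{\Delta(i^\star,j)}$. Telescoping the rest leaves
\[
T_j \;\leq\; \Bigl(\tfrac{1}{\Delta(i^\star,j)} - \tfrac{1}{\Delta(1,j)}\Bigr) + \tfrac{1}{\Delta(i^\star,j)} \;\leq\; \frac{2}{\Delta(i^\star,j)} \;\leq\; \frac{2}{\epsilon_j},
\]
using $\Delta(i^\star,j)>\epsilon_j$ in the last step. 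Combining the two cases yields the per-$j$ bound, and summing over $j=2,\dots,K$ gives the claim.

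The only subtlety is the boundary term at the right edge of the active window $\{i:\Delta(i,j)>\epsilon_j\}$, where the classical Abel/telescoping estimate $\frac{a-b}{a^2}\leq \frac{1}{b}-\frac{1}{a}$ degenerates because $b$ can vanish (Case A) or be as small as desired (Case B). Replacing it by the blunt bound $\frac{a-b}{a^2}\leq \frac{1}{a}$ at that single index is precisely what introduces the factor $2$ in the right-hand side, and handling this one term carefully is the main—but mild—obstacle in the proof.
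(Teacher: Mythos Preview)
Your proof is correct and more direct than the paper's. Both arguments fix $j$, exploit the monotonicity $\Delta(1,j)\ge\Delta(2,j)\ge\dots\ge\Delta(j,j)=0$ (which the paper also uses implicitly via its index $i_\epsilon(j)$), and telescope over $i$; the difference is in how the telescoping is set up. The paper follows \cite{kleinberg+10}: it writes $\Delta(i,j)^{-2}=\int_0^\infty \1(\Delta(i,j)\le y)\,2y^{-3}\,dy$, swaps sum and integral, telescopes $\sum_i (\Delta(i,j)-\Delta(i+1,j))$ inside the integral, and then integrates $\int_{\max\{\epsilon_j,\Delta(j-1,j)\}}^\infty y^{-2}\,dy$. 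You instead use the elementary pointwise inequality $\frac{a-b}{a^2}\le\frac{1}{b}-\frac{1}{a}$ for $a\ge b>0$ to telescope directly, handling the single boundary index (where $b$ may vanish) with the crude bound $\frac{a-b}{a^2}\le\frac{1}{a}$. Your route avoids the integral machinery entirely and makes transparent where the factor $2$ comes from; the paper's integral approach is a standard template that generalizes more readily to other powers of $\Delta(i,j)$ but is heavier here. One minor omission: you should state explicitly that if the active set $\{i<j:\Delta(i,j)>\epsilon_j\}$ is empty then $T_j=0$ and the bound is trivial.
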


\begin{proof}
The proof is adapted from similar techniques used for proving Lem. $5$ of \cite{kleinberg+10}.
First note that 
\begin{align*}
\sum_{1 \le i<j \le n \mid \Delta(i,j) > \epsilon_j}\frac{\Delta(i,j) - \Delta(i+1,j)}{\Delta(i,j)^2} = \sum_{i = 1}^{K-1}\sum_{j \in [K]\sm [i] \mid \Delta(i,j) > \epsilon_j} \frac{\Delta(i,j) - \Delta(i+1,j)}{\Delta(i,j)^2}
\end{align*}

Let us fix any arm $i \in [K-1]$, and denote by $\nabla_{i,j} := \Delta(i,j) - \Delta(i+1,j)$. Then we note
\begin{align*}
\sum_{j \in [K]\sm [i] \mid \Delta(i,j) > \epsilon_j}& \frac{\Delta(i,j) - \Delta(i+1,j)}{\Delta(i,j)^2} = \sum_{j \in [K]\sm [i] \mid \Delta(i,j) > \epsilon_j} \nabla_{i,j}\int_{0}^\infty \1(\Delta(i,j)^{-2} \ge x )dx\\
& = \sum_{j = i+1}^K\1(\Delta(i,j) > \epsilon_j) \nabla_{i,j}\int_{0}^\infty \1(\Delta(i,j)^{-2} \ge x )dx\\
& = \sum_{j = i+1}^K \nabla_{i,j}\int_{0}^\infty  \1(\Delta(i,j) > \epsilon_j, \Delta(i,j)^{-2} \ge x )dx\\
& = 2\sum_{j = i+1}^K \nabla_{i,j}\int_{0}^\infty  y^{-3}\1(\epsilon_j < \Delta(i,j) < y )dy \hspace*{.2cm} \leftarrow \text{ change of variable } x = y^{-2}, dx = -2 y^{-3} dy\\
& = 2\sum_{j = i+1}^K \nabla_{i,j}\int_{\epsilon_j}^\infty  y^{-3}\1(\epsilon_j < \Delta(i,j) < y )dy
\end{align*}

Further summing over all $i \in [K-1]$, we get
\begin{align*}
A_T & := \sum_{1 \le i<j \le n \mid \Delta(i,j) > \epsilon_j}\frac{\Delta(i,j) - \Delta(i+1,j)}{\Delta(i,j)^2} \\
& = \sum_{i = 1}^{K-1}\bigg( 2\sum_{j = i+1}^K \nabla_{i,j}\int_{\epsilon_j}^\infty  y^{-3}\1(\epsilon_j < \Delta(i,j) < y )dy \bigg)\\
& = 2 \sum_{i = 1}^{K-1}\sum_{j = i+1}^K  \int_{\epsilon_j}^\infty \bigg(\nabla_{i,j} y^{-3}\1(\epsilon_j < \Delta(i,j) < y )dy \bigg)\\
& = 2\sum_{j = 2}^K \sum_{i = 1}^{j-1} \int_{\epsilon_j}^\infty y^{-3}  \bigg(\big( \Delta(i,j) - \Delta(i+1,j) \big) \1(\epsilon_j < \Delta(i,j) \leq y) \bigg)dy\\
& = 2 \sum_{j = 2}^K  \int_{\epsilon_j}^\infty y^{-3} \sum_{i = i_y(j)}^{i_{\epsilon_j}(j)-1} \Big( \Delta(i,j) - \Delta(i+1,j) \Big) dy \,,
\end{align*}
where $i_\epsilon(j) := \arg\min\{i | i\leq j, \Delta(i,j)\leq \epsilon\}$ (with the convention that the sum is empty if the $\arg\min$ is empty) and because $\epsilon_j < \Delta(i,j) \leq  y$ is equivalent to $i_y(\epsilon) \leq i \leq i_{\epsilon_j} - 1$.  Using telescoping summation over $i$, we further get:
\begin{align*}
 A_T  & \leq 2 \sum_{j = 2}^K \int_{\epsilon_j}^\infty y^{-3} \big( \Delta(i_y(j),j) - \Delta(i_{\epsilon_j}(j),j) \big) dy \\
 	& \leq 2 \sum_{j = 2}^K  \int_{\epsilon_j}^\infty y^{-3} \Delta(i_y(j),j)  dy \hspace*{.4cm} \leftarrow \text{ since }  \Delta(i_{\epsilon_j}(j),j) > 0  \\
\end{align*}
Then, since $\Delta(i_{y}(j),j) = 0$ if $y < \Delta(j-1,j)$, we have
\begin{align*}
A_T & \leq 2 \sum_{j = 2}^K  \int_{ \max\{\epsilon_j, \Delta(j-1,j)\}}^\infty y^{-3} \Delta(i_y(j),j)  dy   \\
	 & \leq  2 \sum_{j = 2}^K  \int_{\max\{\epsilon_j, \Delta(j-1,j)\}}^\infty y^{-2}  dy \hspace*{2cm} \leftarrow \text{ since }   \Delta(i_y(j),j) \leq  y \nonumber \\
	 & \leq  2 \sum_{j = 2}^K \frac{1}{\max\{\epsilon_j,\Delta(j-1,j)\}} \,,
\end{align*}
which concludes the proof. 
\end{proof}

\subsection{Proof of Theorem~\ref{thm:ub_rmed}}

\ubrmed*

\begin{proof}
We analyse the expected regret \algrmed \,(Alg~\ref{alg:rmed}) for some fixed sequence $\cS_T$. Recall that $t_0$ is the budget spent on exploration of each pair $(i,j)$ and the notation
\[
	\cE(t) :=  \big\{ n_{ij}(t) > t_0, \forall i,j \in S_t, i \neq j\big\}\,,
\]
to be the event when all distinct pairs in $S_t$ have been explored $t_0$ times and 
\[
	\cG(t) := \big\{\forall j > i_t^*, \Delta(i_t^*,j) > \epsilon_i, \, \hp_{i_t^* j}(t) > \nicefrac{1}{2}\big\}
\]
the event when the probabilities $p_{ij}$ have been well estimated by the algorithm.  Then, from Lemma~\ref{lem:goodevent}, we have
\begin{align}
\E&\big[R_T\big]  = \E\bigg[\sum_{t =1}^{T}r_t \bigg]  
	 = \E\bigg[\sum_{t=1}^T \1(\cE^c(t)) r_t + \sum_{t = 1}^{T} \1(\cE(t)) \1(\cG^c(t))r_t + \sum_{t = T_0+1}^{T}  \1(\cE(t)) \1(\cG(t))r_t \bigg] \nonumber \\
	& \leq K^2 t_0 + 2K + K \sum_{i = 1}^{K-1}\sum_{j = i+1 \mid \Delta(i,j)> \epsilon_j}^{K} \frac{e^{-(t_0-1)\Delta(i,j)^2} }{\Delta(i,j)^2}  + \E\bigg[\underbrace{\sum_{t = T_0+1}^{T}  \1(\cE(t)) \1(\cG(t))r_t}_{E_T} \bigg] \,. \label{eq:regretdecomposition}
\end{align} 
We now upper-bound the third term of~\eqref{eq:regretdecomposition}. Remark that under $\cG(t)$ the algorithm chooses $y_t = i_t^*$. Therefore,
\begin{align}
E_T & := \sum_{t = 1}^{T}  \1(\cE(t)) \1(\cG(t)) r_t  \nonumber \\
	& \leq \sum_{t = 1}^{T}  \1(\cG(t)) r_t \nonumber\\
	& = \sum_{t=1}^T \1(\cG(t)) \sum_{1\leq i<j\leq K}  \1(x_t = j, y_t = i) r_t  \nonumber \\
	& = \sum_{t=1}^T \1(\cG(t)) \sum_{1\leq i<j\leq K} \1(x_t = j, y_t = i) \frac{\Delta(i,j)}{2}  \quad \leftarrow \ \text{because $\cG(t)$ implies $y_t = i_t^*$} \nonumber \\
	& \leq \sum_{1\leq i<j\leq K:\Delta(i,j)<\epsilon_j} n_{ij}(T) \frac{\Delta(i,j)}{2} +  \underbrace{\sum_{t=1}^T \1(\cG(t))  \sum_{1\leq i<j\leq K:\Delta(i,j)>\epsilon_j} \1(x_t = j, y_t = i) \frac{\Delta(i,j)}{2}}_{=: D_T} \label{eq:CT} 
 \end{align}
Moreover, recalling the notations $n_{ij}(T) := \sum_{t=1}^T \1\big(\{x_t,y_t\} = \{i,j\}\big)$  and defining 
\[
	\tilde N_{ij}(T) :=  \sum_{k=1}^i \sum_{s=1}^t  \1(\cG(s)) \1(x_t = k, y_t = j) \leq N_{ij}(T) := \sum_{k=1}^i n_{kj}(T) \,,
\] 
we have
\begin{align}
 D_T & :=  \sum_{1\leq i<j\leq K:\Delta(i,j)>\epsilon_j} \sum_{t=1}^T  \1(\cG(t)) \1(x_t = j, y_t = i) \frac{\Delta(i,j)}{2} \nonumber \\
	& = \sum_{1\leq i<j\leq K:\Delta(i,j)>\epsilon_j} \big(\tilde N_{ij}(T) - \tilde N_{(i-1)j}(T)\big) \frac{\Delta(i,j)}{2}  \nonumber \\
	& = \sum_{j=2}^K \sum_{i=1}^{i_{\epsilon_j}(j)}  \big(\tilde N_{ij}(T) - \tilde N_{(i-1)j}(T)\big) \frac{\Delta(i,j)}{2} \nonumber \\
	& = \sum_{j=2}^K \tilde N_{i_{\epsilon_j}j}(T) \frac{\epsilon_j}{2} + \sum_{j=2}^K \sum_{i=1}^{i_{\epsilon_j}(j)-1}  \tilde N_{ij}(T)  \frac{\Delta(i,j) - \Delta(i+1,j)}{2}  \nonumber \\
	& \leq \sum_{1\leq i <j \leq K: \Delta(i,j) \geq \epsilon_j} n_{ij}(T) \frac{\epsilon_j}{2} +  \sum_{1\leq i <j \leq K: \Delta(i,j) > \epsilon_j}  \tilde N_{ij}(T)  \frac{\Delta(i,j) - \Delta(i+1,j)}{2} \label{eq:DT} \,.
\end{align}
Now, we need to upper-bound $\tilde N_{ij}(T)$. We have,
\begin{align*}
	\tilde N_{i,j}(T) 
		& :=  \sum_{t=1}^T \1(\cG(t)) \sum_{k=1}^i \1\big(x_t = j,y_t = k\big)\\
		& \leq   \sum_{t=1}^T \1(\cG(t)) \sum_{k=1}^i \1\big(x_t = j,y_t = k\big) \Big[ \1\big(N_{ij}(t) \leq M_{ij}(\delta)\big) + \1\big(N_{ij}(t) > M_{ij}(\delta)\big)\Big] \\
		& \leq M_{ij}(\delta) + \sum_{t=1}^T \1(\cG(t)) \sum_{k=1}^i \1\big(x_t = j,y_t = k\big)\1\big(N_{ij}(t) > M_{ij}(\delta)\big) \\
		& \leq M_{ij}(\delta) + \frac{32}{\delta^2 \Delta(i,j)^2}  \hspace*{1cm} \leftarrow \text{Lemma~\ref{lem:ni}} \\
		& = \frac{(\alpha + \delta) \log T}{\kl(p_{ji},0.5)} +  \frac{32}{\delta^2 \Delta(i,j)^2} \\
		& \leq \Big(2(\alpha + \delta) \log T + \frac{32}{\delta^2}\Big) \frac{1}{\Delta(i,j)^2} \,,
\end{align*} 
where the last inequality comes from Pinksker's inequality. 
This entails
\begin{align*}
	 \sum_{1\leq i <j \leq K: \Delta(i,j) > \epsilon_j}  & \tilde N_{ij}(T)  \frac{\Delta(i,j) - \Delta(i+1,j)}{2}  \\
	 & \leq \Big((\alpha + \delta) \log T + \frac{16}{\delta^2}\Big)  \sum_{1\leq i <j \leq K: \Delta(i,j) > \epsilon_j} \frac{\Delta(i,j) - \Delta(i+1,j)}{\Delta(i,j)^2}  \\
	 & \leq 2 \sum_{j = 2}^{K}\frac{(\alpha + \delta) \log T + 16 \delta^{-2}}{\max\big\{ \epsilon_j, \Delta(j-1,j) \big\}} \,.
\end{align*}
Combining this inequality with~\eqref{eq:regretdecomposition},~\eqref{eq:CT}, and~\eqref{eq:DT} and choosing $t_0 = 1$ and $\alpha = 4K$ concludes
\begin{align*}
	\E\big[R_T] 
		& \leq K^2t_0 + 2K + K \sum_{i = 1}^{K-1}\sum_{j = i+1 \mid \Delta(i,j)> \epsilon_j}^{K} \frac{e^{-(t_0-1)\Delta(i,j)^2} }{\Delta(i,j)^2}   \\
		& \hspace*{2cm} +  \sum_{1\leq i <j \leq K} n_{ij}(T) \frac{\min\{\epsilon_j, \Delta(i,j)\}}{2} + 2 \sum_{j = 2}^{n}\frac{ (\alpha + \delta) \log T + 16 \delta^{-2}}{\max\big\{ \epsilon_j, \Delta(j-1,j) \big\}} \\
		& \leq K(K+2) 
		+ \sum_{1\leq i<j\leq K\mid \Delta(i,j)> \epsilon_j} \frac{K}{\Delta(i,j)^2 } \\
		& \hspace*{2cm}  +  \sum_{1\leq i <j \leq K} n_{ij}(T) \frac{\min\{\epsilon_j, \Delta(i,j)\}}{2} 
		+ 2 \sum_{j = 2}^{K}\frac{ (4K  + \delta) \log T + 16 \delta^{-2}}{\max\big\{ \epsilon_j, \Delta(j-1,j) \big\}}  \,.
\end{align*}
\end{proof}

\begin{proof}
Recall that the proof was done for any $\epsilon_2,\dots,\epsilon_K \geq 0$ that are independent of the algorithm. In particular, choosing $\epsilon_2,\dots,\epsilon_K = \epsilon$ entails that for any $\epsilon>0$
\begin{align*}
	\E\big[R_T\big] & \lesssim \  K^2 + 
		\epsilon T + \sum_{1\leq i<j\leq K\mid \Delta(i,j)> \epsilon} \frac{K}{\Delta(i,j)^2 }
		+ \sum_{j = 2}^{K}\frac{ K  \log T}{\max\big\{ \epsilon, \Delta(j-1,j) \big\}} 
\end{align*}
which yields making $\epsilon\to 0$ the distribution-dependent asymptotic upper-bound
\[
	\E\big[R_T] \leq O\bigg(K  \log (T) \sum_{j = 2}^{K} \frac{ \1\big\{\Delta(j-1,j)>0\big\} }{\Delta(j-1,j)}  \bigg)
\]
as $T \to \infty$ and for any fix $\epsilon \geq 0$ and choosing $\delta = 1$. Furthermore, optimizing $\epsilon_1 = \epsilon_2 = \dots = \epsilon_K = \epsilon = 2^{1/3}KT^{-1/3}$ yields the distribution-free upper-bound
\[
	\E\big[ R_T\big] \leq  K(K+2) + \frac{K^3}{\epsilon^2} + \frac{T \epsilon}{2} +  \frac{(8K  + 1) K \log T + 16K }{\epsilon} \leq 2 K T^{2/3} + O(K^2 + KT^{1/3}\log T) \,.
\]
\end{proof}


%

}

\end{document}